%% file: ijcai26_final.tex
\documentclass{article}
\usepackage{ijcai26}

\usepackage{times}
\usepackage{soul}
\usepackage{url}
\usepackage[hidelinks]{hyperref}
\usepackage[utf8]{inputenc}
\usepackage[small]{caption}
\usepackage{graphicx}
\usepackage{amsmath}
\usepackage{amsthm}
\usepackage{booktabs}
\usepackage{arydshln}
\usepackage{multirow}
\usepackage{algorithm}
\usepackage{algorithmic}
\usepackage[switch]{lineno}
\usepackage{comment}
\usepackage{inconsolata}
\usepackage{listings}
\usepackage{amssymb}
\usepackage{xspace}
\usepackage{enumitem}
\usepackage{subcaption}
\usepackage[dvipsnames]{xcolor}

\newcommand{\modelname}{DProp\xspace}
\newcommand{\neuralmodelname}{NDProp\xspace}

\newcommand{\edit}{}

\newcommand{\Tr}{T}
\newcommand{\Fa}{F}
\newcommand{\Un}{U}

\newcommand{\Cp}[4]{C_{{#1},{\mathcal{A}\setminus#2}}^{#3}(#4)}
\newcommand{\Cpp}[4]{C_{{#1},{\mathcal{A}_P\setminus#2}}^{#3}(#4)}
\newcommand{\Cpnorm}[4]{C_{{#1},{#2}}^{#3}(#4)}
\newcommand{\FCp}[3]{\mathcal{C}_{{#1},{#2}}(#3)}

\theoremstyle{plain}
\newtheorem{theorem}{Theorem}
\newtheorem{lemma}{Lemma}
\newtheorem{proposition}{Proposition}

\newtheorem{definition}{Definition}

\newtheorem{remark}{Remark}

\newcommand{\citeNBYB}[1]{\citeauthor{#1}~\shortcite{#1}}

\newcounter{myenumctr}

\title{Neural Decision--Propagation for Answer Set Programming}

\author{
Thomas Eiter$^1$
\and
Katsumi Inoue$^{2}$\and
Sota Moriyama$^{3,2}$
\affiliations
$^1$Vienna University of Technology (TU Wien), Austria\\
$^2$National Institute of Informatics, Japan\\
$^3$The Graduate University for Advanced Studies, SOKENDAI, Japan\\
\emails
thomas.eiter@tuwien.ac.at,
\{inoue, sotam\}@nii.ac.jp
}

\begin{document}

\maketitle

\begin{abstract}
Integration of Answer Set Programming (ASP) with neural networks has emerged as a promising tool in Neuro-symbolic AI. While existing approaches extend the capabilities of ASP to real world domains, their reasoning pipelines depend on classical solvers, which is a bottleneck for scalability. To tackle this problem, we propose a new method to compute stable models, called decision--propagation (DProp), which alternates falsity decisions and truth propagations. Successful DProp computations are shown to capture the stable model semantics. We then develop Neural DProp (NDProp), a differentiable extension of DProp with neural computation for decisions and fuzzy evaluation for propagations. We evaluate the capabilities of NDProp for learning decision heuristics as well as neuro-symbolic integration, and compare it with existing neuro-symbolic approaches. The results show that NDProp can learn to efficiently compute stable models, and it improves accuracy and scalability on neuro-symbolic benchmarks.
\footnote{This is the full version (with appendix) of a paper appearing at IJCAI-ECAI 2026. The code is available at \url{https://github.com/sotam2369/NDProp}.}
\end{abstract}

\section{Introduction}
Neuro-symbolic AI aims to develop systems that bring together robust learning in neural networks with reasoning via symbolic representations~\cite{DBLP:journals/air/GarcezL23,DBLP:journals/pami/WangYW25}.
There has been a growing interest in using logic programming 
as a general symbolic reasoning framework for such integrated systems~\cite{DBLP:journals/jair/EvansG18,DBLP:conf/ijcai/YangIL20,DBLP:journals/ai/ManhaeveDKDR21}.
Answer Set Programming (ASP), a declarative programming paradigm widely used for knowledge representation and reasoning tasks~\cite{DBLP:conf/aaai/Lifschitz08,DBLP:journals/cacm/BrewkaET11}, is used in such systems.
Neuro-symbolic integration in ASP has been attempted in various domains such as Visual Question Answering~\cite{DBLP:journals/tplp/EiterHOP22}, compliance checking~\cite{DBLP:journals/tplp/BarbaraGLMQRR23} and many more~\cite{nesy_asp_survey}.

One prominent approach to accomplishing neuro-symbolic integration with ASP is to train neural networks with symbolic knowledge expressed with ASP.  
NeurASP~\cite{DBLP:conf/ijcai/YangIL20} integrates neural atoms into ASP programs, allowing neural network outputs to be directly incorporated into symbolic reasoning and learning.
SLASH~\cite{DBLP:journals/jair/SkryaginODK23} extends this line of work with neural-probabilistic predicates and removes low-probability predicate outcomes before ASP solving, thereby reducing the number of stable models that must be considered.
However, these approaches largely follow a hybrid design in which neural components and symbolic ASP solvers remain separated by an interface: continuous neural predictions must be translated into symbolic atoms/literals, constraints, or probabilistic facts before reasoning can be performed. 
This introduces a combinatorial grounding problem, as the system must reason over multiple candidate associations between perceptual entities and symbolic objects before being processed by an ASP solver.

To connect neural outputs and symbolic representations smoothly,  it would be desirable to compute answer sets in continuous domains. Then, to make this connection easily, we first propose a novel method for computing stable models called \emph{decision--propagation} (\modelname).
A \modelname run iteratively alternates between the following two phases: (i) decision, which selects undecided atoms to commit to false non-deterministically, and (ii) propagation, which updates the current partial assignment by deriving consequences of the program under the current assumptions.
We then prove that successful \modelname computations (runs that end without contradictions) capture the stable model semantics.

Subsequently, we propose \emph{Neural \modelname} (\neuralmodelname), a differentiable extension of \modelname.
\neuralmodelname uses neural networks for falsity decisions and fuzzy logic operators for propagations.
As \neuralmodelname is identical to \modelname in the crisp case, it is able to prove stability when the outputs are binary under a certain condition, making it easy to train in an unsupervised manner.
These make the integration of \neuralmodelname into neuro-symbolic systems seamless, while still preserving a clear connection to classical stable model semantics.
Furthermore, the learned decision steps and parallel capabilities via GPUs make this approach significantly more scalable.

To demonstrate the effectiveness of \neuralmodelname, we conduct three 
kinds of experiments: (1) random ASP solving to determine whether \neuralmodelname can learn the solving dynamics 
for given ASP problems,
(2) neuro-symbolic integration to determine whether using \neuralmodelname has merits over existing approaches that use classical solvers, and 
(3) fuzzy reasoning to determine whether \neuralmodelname can reason and find stable models given fuzzy values as input.
The results show that \neuralmodelname performs better than the respective baselines; 
it is superior with respect to accuracy, and has much higher 
inference speed 
than  existing ASP-based neuro-symbolic approaches.

In summary, our contributions are:
\begin{itemize}
  \item We introduce a novel method for computing stable models called \modelname, and prove that a successful \modelname computation captures stable model semantics.
  \item We develop \neuralmodelname, a differentiable extension of \modelname that uses neural decisions and fuzzy propagations, and show that it coincides with \modelname at the crisp case.
  \item We analyze the capabilities of \neuralmodelname in learning the solving dynamics of random ASP problems, and attempt to integrate \neuralmodelname into neuro-symbolic frameworks.
  \item We demonstrate that \neuralmodelname performs better than respective baselines, specifically with accuracy and inference speed.
\end{itemize}
Consequently, we propose a fully-differentiable answer set solver based on novel fixpoint semantics, yielding improved scalability compared to previous neural approaches to ASP.

\section{Preliminaries}
\subsection{Answer Set Programming}
Answer Set Programming (ASP) is a declarative paradigm based on non-monotonic logic and is widely used for knowledge representation and reasoning~\cite{DBLP:conf/aaai/Lifschitz08,DBLP:journals/cacm/BrewkaET11}.
A \emph{rule} $r$ in a \emph{(normal logic) program} $P$ has the form
\begin{equation}
\label{eq:rule}
      a \leftarrow b_1, \ldots, b_m, not\, c_1, \ldots, not\, c_l,
\end{equation}
where $a$, $b_i$, and $c_j$ are atoms and $not$ denotes default negation.
For a rule $r$ of the form~(\ref{eq:rule}), the atom $a$ is the \emph{head} of the rule denoted as ${H}(r)$, and $b_1, \ldots, b_m, not\, c_1, \ldots, not\, c_l$ is the \emph{body} of the rule denoted as ${B}(r)$.
The sets ${B}^+(r)=\{b_1,\ldots,b_m\}$ and ${B}^-(r)=\{c_1,\ldots,c_l\}$ represent the positive and default-negated body atoms of $r$, respectively.
An \emph{interpretation} $I$, i.e., a subset of the set $\mathcal{A}$ of all atoms, is a
\emph{model}\/ of $P$ if it satisfies each rule $r$ of $P$, i.e., $B^+(r)\subseteq I$ and $B^-(r)\cap I=\emptyset$ imply $H(r)\in I$.
Furthermore, $I$ is a \emph{stable model} (\emph{answer set}) of $P$ if it is the least model of the \emph{Gelfond--Lifschitz reduct} $P^I$, which is obtained from $P$ by deleting every rule whose negative body intersects $I$ and removing all negative literals from the remaining rules.

Given two interpretations $I$, $J$ and a normal logic program $P$, an immediate consequence operator with respect to $J$ \cite{DBLP:journals/jcss/Gelder93} is defined as
\[
C_{P,J}(I)
=
\left\{
H(r) \in \mathcal{A} \mid
B^+(r) \subseteq I,B^-(r) \cap J = \emptyset
\right\}.
\]
We define $C_{P,J}^0(I)=I$ and $C_{P,J}^{k+1}(I)=C_{P,J}(C_{P,J}^{k}(I))$ for integer $k\,(\ge 0)$. 
When $P^+$ is a \emph{definite program}, i.e., a program whose rule contains no default negation $(l=0)$,  
for fixed $J$ there is a least fixpoint of $C_{P^+,J}$ starting from $\emptyset$, which is the least model of $P^+$, denoted as $C^\infty_{P^+,J}(\emptyset)$~\cite{DBLP:journals/jacm/EmdenK76}.  
Then, the next lemma holds.
\begin{lemma}
\label{lem:cp_sm}
    $S$ is a stable model of $P$ iff $S=\Cpnorm{P}{S}{\infty}{\emptyset}$.
\end{lemma}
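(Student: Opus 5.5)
The plan is to reduce the lemma to the definition of stable models by showing that, for a fixed interpretation $J$, iterating $C_{P,J}$ from $\emptyset$ is the same as iterating the ordinary immediate consequence operator $T_{P^J}$ of the Gelfond--Lifschitz reduct $P^J$. Once this is in place, the fixpoint result for definite programs recalled above finishes the argument.

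First, fix $J$ and compare the two operators directly. A rule $r$ of $P$ contributes $H(r)$ to $C_{P,J}(I)$ iff $B^+(r)\subseteq I$ and $B^-(r)\cap J=\emptyset$. On the other side, the rules of $P^J$ are exactly the rules $H(r)\leftarrow B^+(r)$ for those $r\in P$ with $B^-(r)\cap J=\emptyset$. The definite rule $H(r)\leftarrow B^+(r)$ fires on $I$ iff $B^+(r)\subseteq I$. Hence $C_{P,J}(I)=T_{P^J}(I)$ for every $I$, so $C_{P,J}=T_{P^J}$ as operators. In particular, $C_{P,J}$ is monotone in $I$.

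By induction on $k$, we then get $C^k_{P,J}(\emptyset)=T^k_{P^J}(\emptyset)$. Taking the union over $k$, $C^\infty_{P,J}(\emptyset)$ is the least fixpoint of $T_{P^J}$. By the van Emden--Kowalski result cited before the lemma, this least fixpoint is the least model of the definite program $P^J$. Some care is needed to read $C^\infty_{P,J}(\emptyset)$ as $\bigcup_k C^k_{P,J}(\emptyset)$; this is legitimate because the sequence is increasing, which follows from monotonicity together with $\emptyset\subseteq C_{P,J}(\emptyset)$. If $\mathcal{A}$ is infinite, we also need $T_{P^J}$ to be continuous, which holds because rule bodies are finite.

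Finally, instantiate $J=S$. By definition, $S$ is a stable model of $P$ iff $S$ equals the least model of $P^S$. By the previous step, that least model is $C^\infty_{P,S}(\emptyset)$, so both directions of the equivalence follow at once.

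The main obstacle is only the bookkeeping in the first step. One has to check that the deletion and stripping in the reduct correspond exactly to the condition $B^-(r)\cap J=\emptyset$ in $C_{P,J}$. One also has to confirm that the fixpoint notation in the paper indeed denotes the least fixpoint reached from $\emptyset$. Everything else is standard.
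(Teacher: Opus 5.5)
Your proposal is correct and takes essentially the same approach as the paper: identify $C_{P,J}$ with the immediate consequence operator of the reduct $P^J$, conclude that $C^\infty_{P,S}(\emptyset)$ is the least model of $P^S$, and then apply the definition of stability. You additionally spell out the operator equality and the fixpoint bookkeeping (monotonicity, and continuity for infinite $\mathcal{A}$), which the paper leaves implicit.
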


\begin{proof}
\edit{Let $\mathcal{T}_P$ be the immediate consequence operator of a definite program $P$, and write
$\mathcal{T}_P^\infty(\emptyset)$ for the result of iterating $\mathcal{T}_P$ from $\emptyset$ to its least fixpoint. By definition, $S$ is a stable model of $P$
iff $S=\mathcal{T}_{P^{S}}^\infty(\emptyset)$, where $P^{S}$ is the Gelfond--Lifschitz reduct.
For normal programs, iterating $C_{P,S}$ from $\emptyset$ coincides with iterating $\mathcal{T}_{P^{S}}$,
so $C_{P,S}^{\infty}(\emptyset)=\mathcal{T}_{P^{S}}^\infty(\emptyset)$. Hence $S$ is stable iff $S=\Cpnorm{P}{S}{\infty}{\emptyset}$.}
\end{proof}

\subsection{Fuzzy Interpretations}
In many real-world settings it is necessary to reason with vague or incomplete information.
{Fuzzy Answer Set Programming} (FASP) extends ASP by interpreting atoms with a truth degree in the interval $[0,1]$ and evaluating rule bodies, while adopting stable-model-based semantics for fuzzy interpretations~\cite{DBLP:conf/jelia/NieuwenborghCV06,DBLP:journals/tplp/AlvianoP13}.

Let $\mathcal{A}$ be a set of propositional atoms.
A \emph{fuzzy interpretation} is a mapping $I:\mathcal{A}\to[0,1]$ assigning each atom a degree of truth.
To evaluate rule bodies, FASP fixes a \emph{t-norm} $\otimes:[0,1]^2\to[0,1]$ (fuzzy conjunction), i.e., a commutative, associative and monotone operator with unit~$1$.
Typical t-norms include G\"odel, Product, and \L{}ukasiewicz.
Similarly, \emph{t-conorm} $\oplus:[0,1]^2\to[0,1]$ represents fuzzy disjunction.

For a rule $r$ of the form~(\ref{eq:rule}), the degree to which the body is {satisfied} under $I$ is
\[
  I({B}(r))
  \;:=\;
  \Bigl(\bigotimes_{i=1}^m I(b_i)\Bigr)
  \;\otimes\;
  \Bigl(\bigotimes_{j=1}^l \bigl(1-I(c_j)\bigr)\Bigr),
\]
where default negation is interpreted using strong negation $I({not\,}c)=1-I(c)$.

\section{Decision--Propagation for ASP}
In this section, we describe \modelname computations, and show that successful \modelname computations capture stable model semantics.
\modelname uses a forward chaining approach similar to~\cite{DBLP:journals/tplp/LefevreBSG17,DBLP:journals/fuin/PaluDPR09}, with the biggest difference being that the decisions are made at an atom and not at a rule level, making the differentiable extension much simpler.

\begin{figure}[t]
    \centering
    \includegraphics[width=\linewidth]{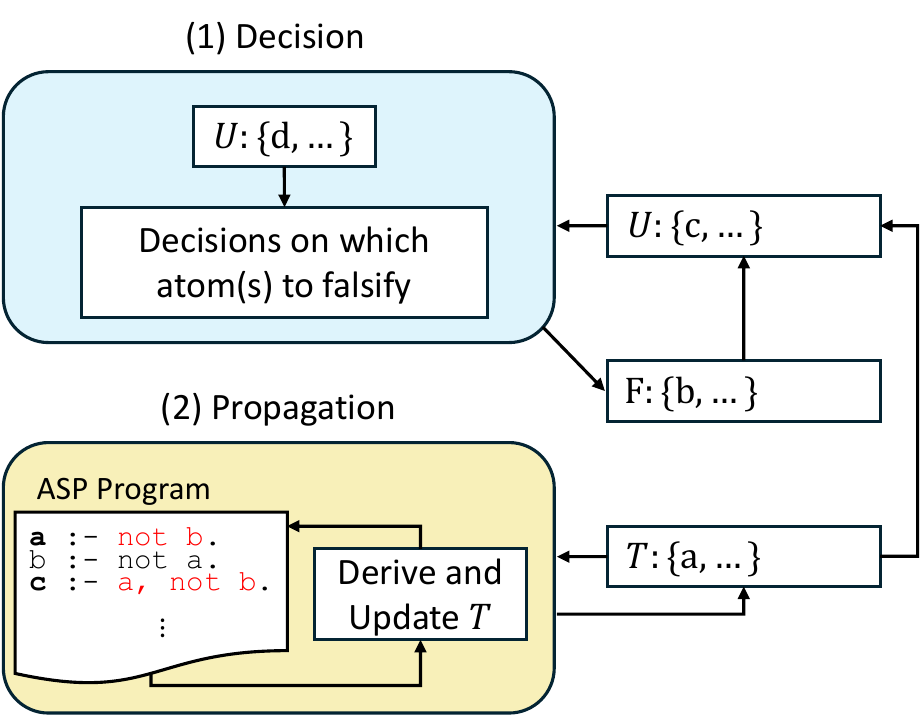}
    \caption{Decision--Propagation (\modelname) iteration pipeline.}
    \label{fig:ndprop_arch}
\end{figure}

Formally, \modelname operates on an atom set $\mathcal{A}$ of $P$,
with each state splitting $\mathcal{A}$ into disjoint true, false, and undecided sets $\Tr$, $\Fa$, and $\Un=\mathcal{A}\setminus(\Tr\cup \Fa)$, respectively. 
With initial sets $\Tr^0=\Cpnorm{P}{\mathcal{A}}{\infty}{\emptyset},\Fa^0=\emptyset$, and $\Un^0=\mathcal{A}\setminus(\Tr^0\cup \Fa^0)$, a \emph{run} in \modelname performs the following iterations $(k\ge 1)$ until termination:
\begin{enumerate}
  \item \textbf{Decision.} Select an undecided atom $a\in \Un^{k-1}$ (via some policy), then update $\Fa^{k}= \Fa^{k-1}\cup\{a\}$.
  \item \textbf{Propagation.} With $\Fa^{k}$ fixed, compute $\Tr^k = C^\infty_{P,\mathcal{A}\setminus\Fa^k}(\Tr^{k-1})$ and set $\Un^{k}= \Un^{k-1}\setminus(\Tr^k\cup\Fa^k)$.
  \item \textbf{Termination.} If $\Un^k=\emptyset$, stop and return $\Tr^k$; otherwise continue to the next decision.
\end{enumerate}
When a run terminates, we write $k=\infty$, obtaining $\Tr^\infty$, $\Fa^\infty$, and $\Un^\infty$.  
Here, both $\Tr$ and $\Fa$ increase monotonically because Propagation
and Decision never remove atoms from them.

\begin{lemma}
\label{lem:termination}
    A \modelname run always terminates for a finite atom set $\mathcal{A}$.
\end{lemma}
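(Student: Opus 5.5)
The argument is a strictly decreasing measure: $|\Un^k|$ drops by at least one in every iteration, so a run can take at most $|\mathcal{A}|$ iterations. Two side facts must also be checked. First, each Propagation step is itself a finite computation. Second, the Decision step is always possible while the run continues, i.e.\ $\Un^{k-1}\neq\emptyset$ whenever iteration $k$ is entered.

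\textbf{Propagation is finite.} I would first show that $C_{P,J}$ is monotone in its argument $I$ for fixed $J$. This holds because raising $I$ can only make more positive bodies $B^+(r)\subseteq I$ true, while the condition $B^-(r)\cap J=\emptyset$ does not depend on $I$.

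Next I would show that $\Tr^{k-1}\subseteq C_{P,\mathcal{A}\setminus\Fa^k}(\Tr^{k-1})$, so the iteration is inflationary. Fix $k$ and put $J=\mathcal{A}\setminus\Fa^k$. For $k=1$, $\Tr^0$ is a fixpoint of $C_{P,\mathcal{A}}$. Since $\Fa$ only grows, $J$ shrinks, and shrinking $J$ makes $C_{P,J}(I)$ larger (fewer negative bodies are blocked). Hence $\Tr^{k-1}$, a fixpoint for the previous $J'\supseteq J$, satisfies $\Tr^{k-1}=C_{P,J'}(\Tr^{k-1})\subseteq C_{P,J}(\Tr^{k-1})$. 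With monotonicity, the sequence $C^i_{P,J}(\Tr^{k-1})$ is then an increasing chain of subsets of the finite set $\mathcal{A}$. It therefore stabilizes after at most $|\mathcal{A}|$ steps, so $C^\infty$ is reached in finitely many steps. The same argument starting from $\emptyset$ shows that $\Tr^0$ is well defined.

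\textbf{Strict decrease of $\Un$.} If $\Un^{k-1}=\emptyset$, the run already stopped at iteration $k-1$ (or at the start, if $k=1$ and we treat termination as checked initially). Otherwise some $a\in\Un^{k-1}$ is chosen and added to $\Fa^k$. Then $\Un^k=\Un^{k-1}\setminus(\Tr^k\cup\Fa^k)\subseteq\Un^{k-1}\setminus\{a\}$, so $|\Un^k|\le|\Un^{k-1}|-1$. Since $|\Un^0|\le|\mathcal{A}|<\infty$, after at most $|\mathcal{A}|$ iterations we get $\Un^k=\emptyset$ and the Termination step fires.

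\textbf{Main obstacle.} I expect the only delicate point to be the well-definedness of $C^\infty$ starting from $\Tr^{k-1}$ rather than from $\emptyset$, because a non-inflationary iteration from an arbitrary set might oscillate. The antitonicity of $C_{P,J}$ in $J$ is what rules this out. I would also note that termination here is not the same as success: $\Tr^k$ and $\Fa^k$ may overlap, which is a contradiction, but this does not affect the counting argument.
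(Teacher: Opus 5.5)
Your proof is correct and uses the same counting argument as the paper: every Decision moves an atom of $\Un^{k-1}$ into $\Fa^k$, so $|\Un|$ strictly decreases and the run stops after at most $|\mathcal{A}|$ iterations. The paper's one-line proof leaves implicit that each Propagation reaches its fixpoint, which you establish via $\Tr^{k-1}\subseteq C_{P,\mathcal{A}\setminus\Fa^k}(\Tr^{k-1})$ (needed, since $C_{P,J}$ is not inflationary and could otherwise oscillate), and the edge case $\Un^0=\emptyset$; you handle both correctly.
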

\begin{proof}
    Decisions always remove an atom $a$ from $\Un^k$ at every iteration. Thus, the number of \modelname iterations is finite, if the atom set $\mathcal{A}$ is finite.
\end{proof}

\begin{definition}[Successful \modelname computation]
\label{def:contr-free}
A successful \modelname\ computation is a \modelname run that terminates with $\Un^\infty=\emptyset$ and $\Tr^\infty\cap \Fa^\infty=\emptyset$.
\end{definition}

The following properties of the operator $C_{P,I}(J)$ are not hard to see.
\begin{lemma}
\label{lem:PF-monotone}
If $\Fa\subseteq {\Fa'}$, then
$\Cp{P}{\Fa}{\infty}{\emptyset} \subseteq \Cp{P}{\Fa'}{\infty}{\emptyset}$.
\end{lemma}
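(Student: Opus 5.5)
The plan is to compare the two fixpoint iterations stage by stage. The key point is that the operator $C_{P,J}(I)$ is monotone in its argument $I$ and \emph{antitone} in its parameter $J$. Write $J=\mathcal{A}\setminus\Fa$ and $J'=\mathcal{A}\setminus\Fa'$. Since $\Fa\subseteq\Fa'$, we have $J'\subseteq J$.

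First I would record two elementary facts, read off directly from the definition of $C_{P,J}$.
\begin{enumerate}
\item[(a)] If $I\subseteq I'$, then $C_{P,J}(I)\subseteq C_{P,J}(I')$. Indeed, $B^+(r)\subseteq I$ implies $B^+(r)\subseteq I'$.
\item[(b)] If $J'\subseteq J$, then $C_{P,J}(I)\subseteq C_{P,J'}(I)$. Indeed, $B^-(r)\cap J=\emptyset$ implies $B^-(r)\cap J'=\emptyset$.
\end{enumerate}
Combining them, $I\subseteq I'$ and $J'\subseteq J$ give $C_{P,J}(I)\subseteq C_{P,J'}(I')$.

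Next I would prove by induction on $k\ge 0$ that $\Cpnorm{P}{J}{k}{\emptyset}\subseteq \Cpnorm{P}{J'}{k}{\emptyset}$. The base case $k=0$ is trivial, since both sides are $\emptyset$. For the inductive step, apply the combined fact (a)+(b) to the inductive hypothesis. By (a), each sequence $\Cpnorm{P}{J}{k}{\emptyset}$ is also increasing in $k$: use induction starting from $\emptyset\subseteq C_{P,J}(\emptyset)$.

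Finally I would pass to the limit. Because each rule has a finite body, the operator is finitary, and the iterates form an increasing chain. So, for fixed $J$, the least fixpoint reached from $\emptyset$ is the union $\bigcup_k \Cpnorm{P}{J}{k}{\emptyset}$. This is the same least fixpoint the paper invokes, since $C_{P,J}$ coincides with the immediate consequence operator of the definite program obtained by dropping the rules blocked by $J$. Taking unions of the stagewise inclusions then yields $\Cp{P}{\Fa}{\infty}{\emptyset}\subseteq\Cp{P}{\Fa'}{\infty}{\emptyset}$. For finite $\mathcal{A}$ this step is even simpler, because both chains stabilize after finitely many steps.

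I expect no real obstacle. The only point that needs care is the direction of (b): enlarging $\Fa$ shrinks the parameter set $\mathcal{A}\setminus\Fa$, and this makes more rules applicable, not fewer.
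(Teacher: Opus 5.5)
Your proof is correct and rests on the same key observation as the paper's: passing from $J=\mathcal{A}\setminus F$ to the smaller $J'=\mathcal{A}\setminus F'$ can only unblock rules, since $B^-(r)\cap J=\emptyset$ implies $B^-(r)\cap J'=\emptyset$. The paper phrases this as the reduct inclusion $P^J\subseteq P^{J'}$ and then cites monotonicity of least models of definite programs under program inclusion. You instead prove that step directly, by the stagewise induction $C_{P,J}^k(\emptyset)\subseteq C_{P,J'}^k(\emptyset)$ followed by passing to the limit, which makes the argument self-contained.
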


\begin{proof}
\edit{Let $J=\mathcal{A}\setminus\Fa$ and $J'=\mathcal{A}\setminus\Fa'$.
Since $\Fa\subseteq\Fa'$, we have $J'\subseteq J$.
If a rule satisfies $B^-(r)\cap J=\emptyset$, then also
$B^-(r)\cap J'=\emptyset$. As the reducts $P^J$ and $P^{J'}$ are obtained by removing rules whose negative body intersects $J$ and $J'$ respectively, and removing negative literals from the remaining rules, $P^J\subseteq P^{J'}$. By standard properties of definite programs, their least models are monotone with respect to program inclusion, yielding $\mathcal{T}_{P^{J}}^\infty(\emptyset)=\Cp{P}{\Fa}{\infty}{\emptyset} \subseteq \mathcal{T}_{P^{J'}}^\infty(\emptyset)=\Cp{P}{\Fa'}{\infty}{\emptyset}$.}
\end{proof}

\begin{lemma}
\label{lemma:least-model-induced}
If $\Tr^{k-1}\subseteq \Cp{P}{\Fa^k}{\infty}{\emptyset}$,
then $\Cp{P}{\Fa^k}{\infty}{\Tr^{k-1}}=\Cp{P}{\Fa^k}{\infty}{\emptyset}$.
\end{lemma}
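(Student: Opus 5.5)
Fix $J=\mathcal{A}\setminus\Fa^k$ and write $M=\Cp{P}{\Fa^k}{\infty}{\emptyset}$. The plan is a sandwich argument for the monotone operator $C_{P,J}$. We show that iterating from $\Tr^{k-1}$ stays below $M$, and that it dominates the iteration from $\emptyset$. Both rest on two elementary facts about $C_{P,J}$ for fixed $J$:
\begin{enumerate}
\item[(i)] \emph{Monotonicity:} $I\subseteq I'$ implies $C_{P,J}(I)\subseteq C_{P,J}(I')$. This holds because the condition $B^+(r)\subseteq I$ is preserved when $I$ grows, and the condition on $J$ does not depend on $I$.
\item[(ii)] \emph{Fixpoint property:} $M$ is a fixpoint of $C_{P,J}$, i.e.\ $C_{P,J}(M)=M$. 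This holds because $M$ is the least fixpoint reached from $\emptyset$; for finite $\mathcal{A}$ the iteration stabilizes after finitely many steps.
\end{enumerate}

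\textbf{Upper bound.} We prove by induction on $n$ that $C^n_{P,J}(\Tr^{k-1})\subseteq M$.
\begin{itemize}
\item The base case $n=0$ is exactly the hypothesis $\Tr^{k-1}\subseteq M$.
\item For the step, assume $C^n_{P,J}(\Tr^{k-1})\subseteq M$. By (i), $C^{n+1}_{P,J}(\Tr^{k-1})\subseteq C_{P,J}(M)$, and by (ii) this equals $M$.
\end{itemize}
Hence $\Cp{P}{\Fa^k}{\infty}{\Tr^{k-1}}\subseteq M$.

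\textbf{Lower bound.} Since $\emptyset\subseteq\Tr^{k-1}$, induction with (i) gives $C^n_{P,J}(\emptyset)\subseteq C^n_{P,J}(\Tr^{k-1})$ for every $n$. Passing to the limit yields $M\subseteq\Cp{P}{\Fa^k}{\infty}{\Tr^{k-1}}$, and together with the upper bound we get the claimed equality.

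\textbf{Main obstacle.} The delicate point is making sense of $C^\infty_{P,J}(I)$ for a nonempty start $I$. The operator $C_{P,J}$ is not inflationary in general: $C_{P,J}(I)$ need not contain $I$. So the sequence from $\Tr^{k-1}$ need not be increasing, and its limit needs justification. I would handle this by showing the sequence from $\Tr^{k-1}$ is squeezed, at every index $n$, between $C^n_{P,J}(\emptyset)$ and $M$. Once the lower sequence has stabilized at $M$, the upper sequence is constant equal to $M$ as well. Therefore the limit exists and equals $M$, whichever reading of $C^\infty$ is intended.

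Alternatively, if propagation is read in the inflationary form $I\cup C_{P,J}(I)$, the same squeeze argument goes through unchanged, since $M\cup C_{P,J}(M)=M$.
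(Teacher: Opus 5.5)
Your proposal is correct and follows essentially the same sandwich argument as the paper. The paper also uses monotonicity of $C_{P,\mathcal{A}\setminus\Fa^k}$ and the fixpoint property of $\Cp{P}{\Fa^k}{\infty}{\emptyset}$ to squeeze $\Cp{P}{\Fa^k}{n}{\Tr^{k-1}}$ between $\Cp{P}{\Fa^k}{n}{\emptyset}$ and the fixpoint, then invokes finiteness of $\mathcal{A}$ to find an index where the lower sequence reaches it; your explicit note that the sequence from $\Tr^{k-1}$ need not be increasing, so the limit is justified by the squeeze, is a point the paper leaves implicit.
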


\begin{proof}
\edit{By assumption, $\Tr^{k-1}\subseteq\Cp{P}{\Fa^k}{\infty}{\emptyset}$. Since $C_{P,\mathcal{A}\setminus\Fa^k}$ is monotone and $\Cp{P}{\Fa^k}{\infty}{\emptyset}$ is its fixpoint, for every $n\ge 0$ we have $\Cp{P}{\Fa^k}{n}{\emptyset}\subseteq\Cp{P}{\Fa^k}{n}{\Tr^{k-1}}\subseteq\Cp{P}{\Fa^k}{\infty}{\emptyset}$. Since $\mathcal A$ is finite, there exists $m$ such that $\Cp{P}{\Fa^k}{m}{\emptyset}=\Cp{P}{\Fa^k}{\infty}{\emptyset}$. Hence, $\Cp{P}{\Fa^k}{\infty}{\emptyset}\subseteq\Cp{P}{\Fa^k}{m}{\Tr^{k-1}}\subseteq\Cp{P}{\Fa^k}{\infty}{\emptyset}$, so $\Cp{P}{\Fa^k}{m}{\Tr^{k-1}}=\Cp{P}{\Fa^k}{\infty}{\emptyset}$. Therefore, $\Cp{P}{\Fa^k}{\infty}{\Tr^{k-1}}=\Cp{P}{\Fa^k}{\infty}{\emptyset}$.}
\end{proof}

\begin{theorem}
\label{thm:termination-stable}
Let $P$ be a finite program and $S \subseteq \mathcal{A}_P$, where $\mathcal{A}_P$ is the set of atoms occurring in $P$.
Then $S$ is a stable model of $P$ if and only if $S$ is the final true set of a successful \modelname computation.
\end{theorem}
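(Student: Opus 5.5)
Both directions go through Lemma~\ref{lem:cp_sm}. The central invariant is that after each propagation step $\Tr^k=\Cp{P}{\Fa^k}{\infty}{\emptyset}$. We will establish it by induction on $k$, with Lemmas~\ref{lem:PF-monotone} and~\ref{lemma:least-model-induced} doing the work. For $k=0$ it holds by the definition of $\Tr^0$, since $\Fa^0=\emptyset$. For the step, assume $\Tr^{k-1}=\Cp{P}{\Fa^{k-1}}{\infty}{\emptyset}$. Because $\Fa^{k-1}\subseteq\Fa^k$, Lemma~\ref{lem:PF-monotone} gives $\Tr^{k-1}\subseteq\Cp{P}{\Fa^k}{\infty}{\emptyset}$. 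Lemma~\ref{lemma:least-model-induced} then yields $\Tr^k=\Cp{P}{\Fa^k}{\infty}{\Tr^{k-1}}=\Cp{P}{\Fa^k}{\infty}{\emptyset}$. Hence on termination $\Tr^\infty=\Cp{P}{\Fa^\infty}{\infty}{\emptyset}$, and the run is path-independent given $\Fa$. Throughout I will work with $\mathcal{A}=\mathcal{A}_P$; extra atoms not occurring in $P$ can never be derived and would simply be decided false.

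(Soundness, ``if''.) Take a successful computation, so $\Un^\infty=\emptyset$ and $\Tr^\infty\cap\Fa^\infty=\emptyset$. Then $\Tr^\infty$ and $\Fa^\infty$ partition $\mathcal{A}_P$, so $\mathcal{A}_P\setminus\Fa^\infty=\Tr^\infty$. Set $S=\Tr^\infty$. The invariant gives $S=\Cpnorm{P}{S}{\infty}{\emptyset}$, and Lemma~\ref{lem:cp_sm} says $S$ is stable. One detail needs checking: the termination test uses $\Un^k=\Un^{k-1}\setminus(\Tr^k\cup\Fa^k)$. This is consistent with $\Un^\infty=\mathcal{A}\setminus(\Tr^\infty\cup\Fa^\infty)$, because $\Tr$ and $\Fa$ only grow.

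(Completeness, ``only if''.) Let $S$ be stable and let $F^*=\mathcal{A}_P\setminus S$. I construct a run whose policy decides only atoms of $F^*$. At step $k$, choose any $a\in\Un^{k-1}\cap F^*$; if that set is empty while $\Un^{k-1}\neq\emptyset$, argue it cannot happen. The key claim is that the invariant $\Fa^k\subseteq F^*$ implies $\Tr^k\subseteq S$. Indeed, $\mathcal{A}\setminus\Fa^k\supseteq S$, so Lemma~\ref{lem:PF-monotone}, applied to $\Fa^k\subseteq F^*$, gives $\Tr^k=\Cp{P}{\Fa^k}{\infty}{\emptyset}\subseteq\Cp{P}{F^*}{\infty}{\emptyset}=S$ by Lemma~\ref{lem:cp_sm}.

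It follows that every undecided atom lies either in $F^*$, so it is available for decision, or in $S\setminus\Tr^{k-1}$. Now suppose $\Un^{k-1}\cap F^*=\emptyset$. Then $\Fa^{k-1}=F^*$, since every atom of $F^*$ is undecided, decided false, or true, and the last case is excluded because $\Tr\subseteq S$. This means the previous propagation already produced $\Tr^{k-1}=\Cp{P}{F^*}{\infty}{\emptyset}=S$, so $\Un^{k-1}=\emptyset$ and the run would already have stopped. The only exception is $k=1$, where $\Fa^0=\emptyset$; that case arises only if $F^*=\emptyset$, and then $\Tr^0=\Cp{P}{\emptyset}{\infty}{\emptyset}=S$ as well.

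So the run terminates with $\Fa^\infty\subseteq F^*$, $\Tr^\infty\subseteq S$ and $\Un^\infty=\emptyset$. These force $\Tr^\infty=S$ and $\Fa^\infty=F^*$, which are disjoint, so the computation is successful.

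The main obstacle is this completeness bookkeeping: showing the chosen policy never gets stuck, and handling the boundary case where $S=\mathcal{A}_P$ so that no decision is needed at all (the loop is entered with $\Un^0=\emptyset$). I would first settle that boundary case by reading the termination check as also applying at $k=0$, and then run the argument above.
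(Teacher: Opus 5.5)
Your proposal is correct and follows essentially the same route as the paper. In both directions the work is done by Lemmas~\ref{lem:PF-monotone}, \ref{lemma:least-model-induced} and \ref{lem:cp_sm}, and for completeness the paper uses exactly your policy of falsifying only atoms in $\mathcal{A}_P\setminus S$. What you add is rigor the paper leaves implicit: the inductive invariant $\Tr^k=\Cp{P}{\Fa^k}{\infty}{\emptyset}$ that justifies the hypothesis of Lemma~\ref{lemma:least-model-induced}, the argument that this policy never gets stuck while $\Un\neq\emptyset$, the partition argument that pins down $\Tr^\infty=S$ rather than merely showing $\Tr^\infty$ is stable, and the $\Un^0=\emptyset$ boundary case.
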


\begin{proof}
\textbf{($\Rightarrow$)} Let $S$ be a stable model of $P$, and $\Fa^{k-1}$, $\Tr^{k-1}$ and $\Un^{k-1}$ be the sets of atoms that are true, false and undefined at the beginning of the $k$-th iteration, where $k=\infty$ is the final iteration.
Consider a run of \modelname that at each step decides the falsity of an atom $a \in (\mathcal{A}\setminus S)\cap\Un^{k-1}$. The propagation step then yields $\Tr^k=\Cpp{P}{\Fa^{k}}{\infty}{\Tr^{k-1}}$, which by Lemma~\ref{lemma:least-model-induced} equals $\Cpp{P}{\Fa^{k}}{\infty}{\emptyset}$.
Because $\Fa^k\subseteq\mathcal{A}_P\setminus S$, we have $S\subseteq\mathcal{A}_P\setminus \Fa^k$ and hence, by Lemma~\ref{lem:PF-monotone},
$\Cpp{P}{\Fa^{k}}{\infty}{\emptyset}\subseteq C_{P,S}^{\infty}(\emptyset)$. By Lemma~\ref{lem:cp_sm}, $C_{P,S}^{\infty}(\emptyset)=S$,
so $\Tr^k\subseteq S$ and $\Tr^k\cap\Fa^k=\emptyset$ for all $k$.
At termination, $\Un^\infty=\emptyset$, so $\Tr^\infty=\Cpp{P}{\Fa^\infty}{\infty}{\emptyset}=\Cpnorm{P}{\Tr^\infty}{\infty}{\emptyset}$ hence $\Tr^\infty=S$ from Lemma~\ref{lem:cp_sm}.
From Lemma~\ref{lem:termination}, the computation will terminate as the given program $P$ is finite.

\textbf{($\Leftarrow$)} Let $S$ be the final true set of a successful \modelname computation, and $\Tr^{\infty-1}$ and $\Fa^{\infty}$ be the true and false sets before the final propagation step. As the propagation step does not modify $\Fa^{\infty}$, from Definition~\ref{def:contr-free}, $S=\mathcal{A}\setminus \Fa^{\infty}$ so $S=\Cpnorm{P}{\mathcal{A}\setminus\Fa^\infty}{\infty}{\Tr^{\infty-1}}=\Cpnorm{P}{S}{\infty}{\Tr^{\infty-1}}$.
From Lemma~\ref{lemma:least-model-induced}, $\Cpnorm{P}{S}{\infty}{\Tr^{\infty-1}}=\Cpnorm{P}{S}{\infty}{\emptyset}$, and Lemma~\ref{lem:cp_sm} implies that $S$ is a stable model of $P$.
\end{proof}

\begin{remark}[Batch decisions]
\label{rem:batch-decisions}
The proof only relies on the fact that $\Fa$ grows monotonically and is held fixed during propagation. Hence any variant that, in a single decision step, adds a set $E \subseteq \Un$ to $\Fa$ (i.e., $\Fa \leftarrow \Fa \cup E$) satisfies the same invariants and therefore admits the same characterization: every \modelname\ computation of such a variant yields a stable model.
\end{remark}

\begin{figure*}[t]
    \centering
    \includegraphics[width=0.9\linewidth]{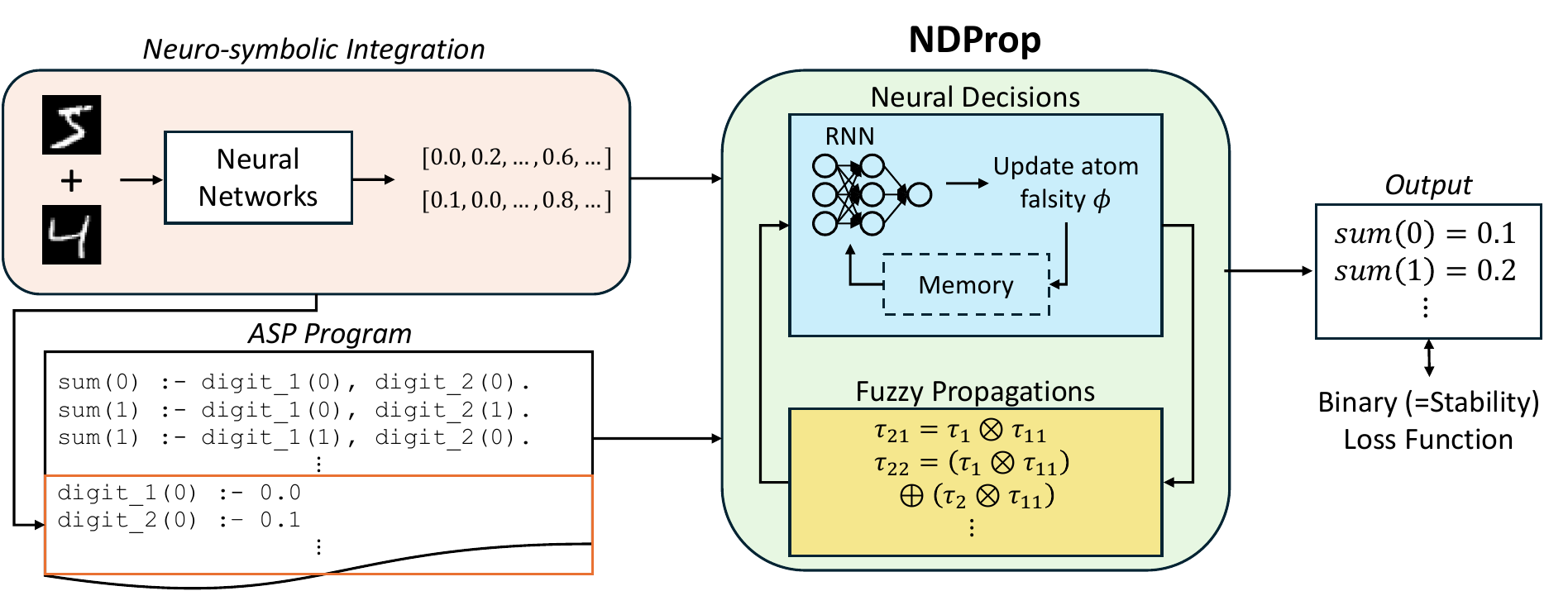}
    \caption{\neuralmodelname pipeline. The neuro-symbolic integration is added for specific instantiations, and the arrow leading from the neuro-symbolic block to the ASP program is added only when the backbone neural networks are being trained.}
    \label{fig:architecture}
\end{figure*}

\section{Neural Decision--Propagation}
We now instantiate \modelname\ with differentiable operators, yielding a neural solver \neuralmodelname\ that is an extension of \modelname. \edit{The overview of the \neuralmodelname\ pipeline is shown in the middle section of Figure~\ref{fig:architecture}.}
Corresponding to the true and false sets $\Tr$ and $\Fa$, we consider true and false \emph{degrees} $(\tau,\phi)\in[0,1]^n\times[0,1]^n$.
Furthermore, decisions are implemented via neural decision modules that gradually increase falsity.

\subsection{Neural Extensions of \modelname}
Each atom $a_i$ carries a true and false degree $\tau_i$ and $\phi_i$.
Literal evaluations are done with the $\mathrm{val}$ operator, which outputs for positive and \edit{negative} literals
\[
  \mathrm{val}(a_i)=\tau_i \ \text{ and }\ \mathrm{val}({not}\,a_i)=\phi_i,
\]
respectively. Regarding $(\tau_i, \phi_i)$,  values near $(0,0)$ indicate undecided atoms, $(1,0)$ denotes derived truth, $(0,1)$ denotes explicit falsity, and $(1,1)$ encodes contradiction.
For \neuralmodelname, we combine these into a single membership score
\[
p_i=\tfrac12(\tau_i + 1 - \phi_i)\in[0,1],
\]
which serves as the atom’s soft truth value indicating whether the atom should be included in the answer set or not.
Importantly, this becomes binary iff $(\tau_i, \phi_i)$ is $(1,0)$ or $(0,1)$, meaning that if the variable is either undecided $(0,0)$ or $(1,1)$ contradictory, the score will never become binary.
We denote by $(\tau^{t},\phi^{t})$ the degrees at iteration~$t$.

\subsubsection{Decision Phase}
Decisions update only $\phi$ while holding $\tau$ fixed, as in \modelname.
Given a t-norm $\otimes$ and t-conorm $\oplus$, the undecided degree $\mu_i$ for atom $a_i$ is defined as
\[
  \mu^t_i := 1-(\tau_i^t\oplus\phi_i^t) = (1-\tau^{t}_i) \otimes (1-\phi^{t}_i) \in [0,1].
\]
We then produce decisions via a neural decision module, which in this case is a Recurrent Neural Network (RNN) that takes the current input and hidden state $s^{t-1}$ and yields the decision logits $d^t$ and updated hidden state $s^t$ as follows:
\[
  (d^t,s^t) = \mathrm{RNN}_{\mathrm{dec}}\!\left([\tau^t\,\|\,\phi^t\,\|\,\mu^t],\,s^{t-1}\right),
\]
where $\|$ denotes the concatenation operation.
RNNs retain historical information with the hidden states $s^t$, making it suitable for our approach that requires producing decisions over multiple steps.
\edit{We then apply a sigmoid function $\sigma: \mathbb{R}\to(0,1)$ on the decision logits to yield the decision scores:}
\[
  \delta^t = \sigma\!\left(d^t\right).
\]
The falsity degrees move toward these selections via
\begin{equation}\label{eq:phi}
\phi^{t+1} = \phi^{t} + (\mu^t \odot \delta^t),
\end{equation}
with $\odot$ denoting elementwise multiplication.
\edit{Note that $\phi^{t+1}$ remains in $[0,1]$, since $\mu^t\leq1-\phi^t$ and $\delta^t\in(0,1)$.}
Because $\mu^t$
becomes zero for already decided atoms, this additive rule directly mirrors the discrete falsity decision ($\Fa^{t}= \Fa^{t-1}\cup E$ cf. Remark~\ref{rem:batch-decisions}) when $(\tau^t,\phi^t)$ and $\delta^t$ are binary.

\subsubsection{Propagation Phase}
Propagation updates only $\tau$ while holding $\phi$ fixed, as in \modelname. Given a t-norm $\otimes$ and t-conorm $\oplus$, for each rule $r_j$, its rule body contributes support
\[
  \rho_j(\tau,\phi) := \bigotimes_{\ell\in B(r_j)}\mathrm{val}(\ell).
\]
Each head atom $a_i$ aggregates these supports via the soft immediate consequence operator
\[
  (\FCp{P}{\phi}{\tau})_i := \bigoplus_{j:H(r_j)=a_i} \rho_j(\tau,\phi).
\]
Given $\phi^{t+1}$, we update $\tau$ from the current state $\tau^{t}$ by iterating $\mathcal{C}_{P,\phi^{t+1}}$ until convergence:
\begin{equation*}
    \tau^{t+1}=\mathcal{C}_{P,\phi^{t+1}}^{\infty}({\tau^{t}}).
\end{equation*}
In practice, we stop the iteration early when the maximum change between successive applications of $\mathcal{C}_{P,\phi^{t+1}}$ falls below a fixed convergence threshold, or the iteration reaches a given pre-defined value.

\subsubsection{Crisp Case Recovers \modelname}
We give the proof that \modelname can be emulated with a specific constraint on \neuralmodelname, showing that \neuralmodelname is a differentiable extension of \modelname.

\begin{definition}[Set induced by $\psi$]
Let $\psi\in[0,1]^n$ be a real valued vector with the $i$-th value $\psi_i$ corresponding to atom $a_i$. The set induced by $\psi$ is defined as
\[
[\psi] := \{\, a_i \mid \psi_i = 1 , 1\leq i \leq n\,\}.
\]
\end{definition}

In the crisp case, the operator $\mathcal C_{P,\phi}$ makes the same atoms true as $C_{P,\mathcal{A}\setminus \Fa}$, that is:

\begin{lemma}
\label{lem:cp_crisp}
    Let $(\tau,\phi)\in\{0,1\}^{n}\times\{0,1\}^{n}$, $\Tr=[\tau]$ and $\Fa=[\phi]$. Then, $[\FCp{P}{\phi}{\tau}]=\Cp{P}{\Fa}{}{\Tr}$.
\end{lemma}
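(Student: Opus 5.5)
The argument is a direct comparison of the two operators atom by atom, using only that any t-norm and t-conorm agree with Boolean conjunction and disjunction on $\{0,1\}$. First I record the boundary facts. For a t-norm $\otimes$, $x\otimes 1=x$ (unit), and $0\otimes y=0$ by monotonicity ($0\otimes y\le 0\otimes 1=0$). Hence on $\{0,1\}$, $\otimes$ is classical conjunction, and by associativity and induction so is any finite $\bigotimes$, with the empty t-norm equal to $1$. Dually, a t-conorm has unit $0$ and satisfies $1\oplus y=1$, so on $\{0,1\}$ it is classical disjunction, with the empty t-conorm equal to $0$. This matches an atom with no rules never being derived by $C$.

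Next I fix a rule $r_j$ and evaluate its literals in the crisp state. For $b\in B^+(r_j)$ we have $\mathrm{val}(b)=\tau_b\in\{0,1\}$, and it equals $1$ iff $b\in\Tr$. For $c\in B^-(r_j)$ we have $\mathrm{val}(not\,c)=\phi_c$, which equals $1$ iff $c\in\Fa$, that is, iff $c\notin\mathcal{A}\setminus\Fa$. By the first step, $\rho_j(\tau,\phi)\in\{0,1\}$, and $\rho_j(\tau,\phi)=1$ iff $B^+(r_j)\subseteq\Tr$ and $B^-(r_j)\cap(\mathcal{A}\setminus\Fa)=\emptyset$. This is exactly the condition under which $r_j$ fires in $C_{P,\mathcal{A}\setminus\Fa}(\Tr)$.

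Then I aggregate over the rules with head $a_i$. $(\FCp{P}{\phi}{\tau})_i$ is a finite t-conorm of values in $\{0,1\}$, so it equals $1$ iff some $r_j$ with $H(r_j)=a_i$ has $\rho_j=1$. That holds iff $a_i\in\Cp{P}{\Fa}{}{\Tr}$. Since the output vector is binary, $[\FCp{P}{\phi}{\tau}]$ is exactly the set of atoms with value $1$, and the equality of sets follows.

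The main obstacle is only the boundary bookkeeping. I must derive $0\otimes y=0$ and $1\oplus y=1$ from monotonicity and unit, since the paper lists only those axioms for t-norms and treats t-conorms informally. I would assume the standard duality $x\oplus y=1-((1-x)\otimes(1-y))$, which the paper implicitly uses in defining $\mu$. I must also handle the empty-body and empty-head-set conventions so that facts fire and atoms without rules stay $0$.
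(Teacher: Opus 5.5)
Your proof is correct and follows the same route as the paper's. Like the paper, you evaluate literals in the crisp state and show that each rule support $\rho_j$ is $1$ exactly when $B^+(r_j)\subseteq\Tr$ and $B^-(r_j)\subseteq\Fa$ (i.e., $B^-(r_j)\cap(\mathcal{A}\setminus\Fa)=\emptyset$), then aggregate with $\oplus$. Your derivation of $0\otimes y=0$ and $1\oplus y=1$ from unit and monotonicity, and your handling of the empty-body and no-rule conventions, make explicit what the paper's proof leaves implicit.
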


\begin{proof}
\edit{With binary $(\tau,\phi)$, a positive literal evaluates to $\tau_i\in\{0,1\}$ and a default-negated literal evaluates to $\phi_i\in\{0,1\}$ for atom $a_i$.
Thus each rule $r_j$'s support $\rho_j(\tau,\phi)$ is $1$ exactly when $B^+(r_j)\subseteq [\tau]$ and $B^-(r_j)\subseteq [\phi]$.
Aggregating supports via t-conorm $\oplus$ yields exactly the heads derived by $C_{P,\mathcal{A}\setminus\Fa}(\Tr)$, so the induced true set coincides.}
\end{proof}

Then we obtain the following property:
\begin{proposition}[Crisp Case]
\label{prop:binary-limit}
Let $(\tau,\phi)\in\{0,1\}^{n}\times\{0,1\}^{n}$, $\delta\in\{0,1\}^n$, and propagation compute the fixpoint of $\mathcal C_{P,\phi}$.
Then, one update of \neuralmodelname\ coincides with one iteration of \modelname.
\end{proposition}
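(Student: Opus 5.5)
We prove the proposition by checking the two phases of one \neuralmodelname\ update separately and showing that each reproduces the corresponding \modelname\ step on the induced sets. Set $\Tr=[\tau]$ and $\Fa=[\phi]$, and let $E=[\delta]\cap \Un$ with $\Un=\mathcal{A}\setminus(\Tr\cup\Fa)$. By Remark~\ref{rem:batch-decisions}, the \modelname\ step to be matched is the batch decision $\Fa\leftarrow\Fa\cup E$ followed by propagation to $C^\infty_{P,\mathcal{A}\setminus(\Fa\cup E)}(\Tr)$. We assume throughout that the t-norm and t-conorm restrict to Boolean $\wedge,\vee$ on $\{0,1\}$, which holds for every t-norm by the unit and monotonicity axioms and dually for t-conorms.

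\textbf{Decision phase.} In the crisp case $\mu_i=(1-\tau_i)\otimes(1-\phi_i)$ equals $1$ exactly when $\tau_i=\phi_i=0$, i.e.\ when $a_i\in\Un$, and equals $0$ otherwise. Formally $\delta^t=\sigma(d^t)$ lies in $(0,1)$, so the hypothesis $\delta\in\{0,1\}^n$ has to be read as a hard threshold or as the limiting case. Under that reading, $\mu_i\delta_i\in\{0,1\}$ and it equals $1$ iff $a_i\in E$. Equation~(\ref{eq:phi}) then gives $\phi^{+}_i=\phi_i+\mu_i\delta_i$. This sum is never $2$, because $\mu_i=1$ forces $\phi_i=0$. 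Hence $\phi^+$ is binary and $[\phi^+]=\Fa\cup E$, which is exactly the \modelname\ decision. The phase leaves $\tau$ unchanged, just as \modelname\ leaves $\Tr$ unchanged during decision.

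\textbf{Propagation phase.} With $\phi^+$ fixed and binary, the operator maps binary vectors to binary vectors, since it is built from $\otimes,\oplus$ on $\{0,1\}$. By induction on $k$, using Lemma~\ref{lem:cp_crisp} at each step, we get
\[
[\mathcal{C}^{k}_{P,\phi^+}(\tau)]=C^{k}_{P,\mathcal{A}\setminus(\Fa\cup E)}(\Tr).
\]
Because the iterates are binary, each vector equals the indicator of its induced set. Hence the sequence of vectors stabilizes precisely when the sequence of sets does. Since $\mathcal{A}$ is finite, both sequences reach their fixpoint in finitely many steps, and the fixpoint $\tau^+$ satisfies $[\tau^+]=C^\infty_{P,\mathcal{A}\setminus(\Fa\cup E)}(\Tr)$. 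This is the \modelname\ propagation result, and the undecided sets then agree as well.

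\textbf{Expected obstacle.} The main subtlety is not the set computation but the bookkeeping. We must make sure every iterate stays binary, so that Lemma~\ref{lem:cp_crisp} can be applied repeatedly. We must also handle the fact that the fuzzy operator is a pointwise recomputation rather than a cumulative union. The \modelname\ operator $C_{P,J}$ is likewise non-inflationary as written, so the two iterations can be compared step by step through the lemma without needing monotonic growth from $\Tr$. Finally, we have to state explicitly that binary $\delta$ means the thresholded decision scores.
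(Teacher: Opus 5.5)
Your proof is correct and follows the paper's route. Like the paper, you identify $[\mu]$ with $\Un$, so that $\phi+\mu\odot\delta$ induces $\Fa\cup(\Un\cap[\delta])$ (a batch decision, Remark~\ref{rem:batch-decisions}), and you obtain propagation by applying Lemma~\ref{lem:cp_crisp} at every iterate, with more careful bookkeeping on binarity than the paper gives.

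One small correction: finiteness of $\mathcal{A}$ alone does not make the iteration from an arbitrary binary $\tau$ reach a fixpoint. For $P=\{a\leftarrow b,\ b\leftarrow a\}$ with $[\tau]=\{a\}$, it oscillates between $\{a\}$ and $\{b\}$. Existence of the fixpoint should therefore be taken from the proposition's hypothesis, or from $\Tr$ being a post-fixpoint as it is in an actual \modelname\ run; your stepwise correspondence then transfers it exactly as you say.
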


\begin{proof}
With binary $(\tau,\phi)$, the set induced by undecided degree $\mu=(1-\tau)\otimes(1-\phi)$ corresponds to $\Un=\mathcal A\setminus(\Tr\cup\Fa)$ when $\Tr=[\tau]$ and $\Fa=[\phi]$.
Thus, the update $\phi'=\phi+(\mu\odot\delta)$ coincides with $\Fa'=\Fa \cup (\Un \cap [\delta])$. 
From Remark~\ref{rem:batch-decisions}, deciding falsity of a set of atoms is fine, thus this exactly coincides with the decision step of \modelname when the falsity of a set of atoms $\Un \cap [\delta]$ is decided.

Since $(\tau,\phi')\in\{0,1\}^{n}\times\{0,1\}^{n}$, from Lemma~\ref{lem:cp_crisp} $[\mathcal C_{P,\phi'}(\tau)]=C_{P,\mathcal{A}\setminus \Fa'}(\Tr)$.
When $\mathcal C_{P,\phi'}(\tau)$ is applied until a fixpoint, this exactly coincides with the propagation step of \modelname.
Furthermore, when $\mu,\delta$ is binary, $\tau$ and $\phi$ are guaranteed to be binary as well.
\end{proof}

\subsection{Loss Alignment}

We prove that
enforcing \neuralmodelname to output binary values is equivalent to enforcing stability. 
Here, $\psi' \le \psi$ is equivalent to $\forall i\;(\psi'_i \le \psi_i)$ and $\textbf{0}$ and $\textbf{1}$ are $n$-dimensional vectors of all $0$s and all $1$s, respectively.

\begin{lemma}
\label{lem:false_monotonic}
Let $\phi\le\phi'$. 
Then, 
$\mathcal{C}_{P,\phi}(\tau)\le \mathcal{C}_{P,\phi'}(\tau)$, and consequently
$\mathcal{C}_{P,\phi}^\infty(\textbf{0})\le \mathcal{C}_{P,\phi'}^\infty(\textbf{0})$.
\end{lemma}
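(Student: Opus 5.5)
The plan is to prove the first inequality directly from the monotonicity of the fuzzy connectives. Then the second follows by induction on the number of iterations together with monotonicity of $\mathcal{C}_{P,\phi'}$ in its argument $\tau$.

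\textbf{Step 1 (one application).} Fix $\tau$ and a rule $r_j$. Each factor in $\rho_j(\tau,\phi)$ is either $\mathrm{val}(a_i)=\tau_i$, which is the same under $\phi$ and $\phi'$, or $\mathrm{val}(not\,a_i)=\phi_i\le\phi'_i$. A t-norm is monotone in each argument, and the finite product $\bigotimes$ is built by iterating $\otimes$. So a routine induction on the number of body literals gives $\rho_j(\tau,\phi)\le\rho_j(\tau,\phi')$. The t-conorm $\oplus$ is likewise monotone in each argument. Aggregating over the rules with head $a_i$ therefore gives $(\FCp{P}{\phi}{\tau})_i\le(\FCp{P}{\phi'}{\tau})_i$ for every $i$. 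Atoms with no defining rule get the empty t-conorm $0$ on both sides. This proves $\mathcal{C}_{P,\phi}(\tau)\le\mathcal{C}_{P,\phi'}(\tau)$.

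\textbf{Step 2 (monotonicity in $\tau$).} By the same argument, now varying the positive literals, $\tau\le\tau'$ implies $\mathcal{C}_{P,\phi}(\tau)\le\mathcal{C}_{P,\phi}(\tau')$ for fixed $\phi$.

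\textbf{Step 3 (iterates).} I will show by induction on $k$ that $\mathcal{C}^k_{P,\phi}(\textbf{0})\le\mathcal{C}^k_{P,\phi'}(\textbf{0})$.
\begin{itemize}
\item The base case $k=0$ is trivial.
\item For the inductive step, apply Step 1 and then Step 2:
\[
\mathcal{C}^{k+1}_{P,\phi}(\textbf{0})=\mathcal{C}_{P,\phi}\bigl(\mathcal{C}^k_{P,\phi}(\textbf{0})\bigr)\le\mathcal{C}_{P,\phi'}\bigl(\mathcal{C}^k_{P,\phi}(\textbf{0})\bigr)\le\mathcal{C}_{P,\phi'}\bigl(\mathcal{C}^k_{P,\phi'}(\textbf{0})\bigr).
\]
\end{itemize}

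\textbf{Step 4 (passing to the limit).} The main obstacle is giving meaning to $\mathcal{C}^\infty$ in the fuzzy setting, since the iteration need not stabilize after finitely many steps. Starting from $\textbf{0}$, Step 2 gives $\textbf{0}\le\mathcal{C}_{P,\phi}(\textbf{0})$, and inductively the iterates form an increasing sequence in $[0,1]^n$. Each coordinate therefore converges, and $\mathcal{C}^\infty_{P,\phi}(\textbf{0})$ can be taken as this limit, or supremum. The pointwise inequality of Step 3 holds for every $k$ and is preserved under limits, so the second claim follows.

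If the paper's $\mathcal{C}^\infty$ is instead meant as a least fixpoint, I would identify it with the supremum of the iterates, which requires continuity of $\otimes$ and $\oplus$. Alternatively, I could argue directly by Knaster--Tarski on the complete lattice $[0,1]^n$: the fixpoint $\mathcal{C}^\infty_{P,\phi'}(\textbf{0})$ is a pre-fixpoint of $\mathcal{C}_{P,\phi}$ by Step 1, so it lies above the least fixpoint of $\mathcal{C}_{P,\phi}$.
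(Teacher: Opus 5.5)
Your proof is correct and follows essentially the same route as the paper. Monotonicity of $\otimes$ and $\oplus$ gives the one-step inequality $\mathcal{C}_{P,\phi}(\tau)\le\mathcal{C}_{P,\phi'}(\tau)$; induction over the iterates from $\textbf{0}$ and passage to the limit then give the second claim, and your Step~2 (monotonicity in $\tau$) and your check that the iterates increase and converge make explicit what the paper's inductive step and ``taking the limit'' use only tacitly.
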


\begin{proof}
\edit{For fixed $\tau$, negative literals evaluate as $\mathrm{val}({not}\, a_i)=\phi_i$.
Since the t-norm $\otimes$ and t-conorm $\oplus$ are monotone in each argument,
$\phi\le\phi'$ implies each rule support $\rho(\tau,\phi)\le\rho(\tau,\phi')$,
hence $\mathcal{C}_{P,\phi}(\tau)\le \mathcal{C}_{P,\phi'}(\tau)$.
Because $\mathcal{C}_{P,\phi}\le \mathcal{C}_{P,\phi'}$ pointwise, iterating from $\textbf{0}$ yields
$\mathcal{C}_{P,\phi}^k(\textbf{0})\le \mathcal{C}_{P,\phi'}^k(\textbf{0})$ for all $k$, and taking the limit gives
$C_{P,\phi}^\infty(\textbf{0})\le C_{P,\phi'}^\infty(\textbf{0})$.}
\end{proof}

\begin{lemma}
    \label{lem:monotonic}
    Let $\tau^{t-1}\leq \mathcal{C}_{P,\phi^{t}}^\infty(\textbf{0})$. Then, $\mathcal{C}_{P,\phi^{t}}^\infty(\tau^{t-1})=\mathcal{C}_{P,\phi^{t}}^\infty(\textbf{0})$.
\end{lemma}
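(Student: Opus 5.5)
The plan is to mirror the proof of Lemma~\ref{lemma:least-model-induced}, replacing set inclusion by the pointwise order on $[0,1]^n$. Write $F=\mathcal{C}_{P,\phi^{t}}$ and $\tau^\ast=F^\infty(\textbf{0})$. The first ingredient is that $F$ is monotone in $\tau$. For fixed $\phi^t$, each positive literal evaluates to $\tau_i$, and $\rho_j$ is a t-norm of such values and constants. Since $\otimes$ and $\oplus$ are monotone in each argument, $\tau\le\tau'$ gives $\rho_j(\tau,\phi^t)\le\rho_j(\tau',\phi^t)$ and hence $F(\tau)\le F(\tau')$. This is the same argument as in Lemma~\ref{lem:false_monotonic}, now applied to the positive literals.

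The second ingredient is that $\tau^\ast$ is a fixpoint of $F$. In the paper's setting $F^\infty$ is reached as the stabilized (or limit) value of the iteration, so $F(\tau^\ast)=\tau^\ast$.

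With these two facts, the main step is a sandwich by induction on $k$:
\[
F^k(\textbf{0})\le F^k(\tau^{t-1})\le \tau^\ast \quad\text{for all } k\ge 0 .
\]
For $k=0$ this reads $\textbf{0}\le\tau^{t-1}\le\tau^\ast$, which is exactly the hypothesis. For the inductive step, apply $F$ to each inequality and use monotonicity together with $F(\tau^\ast)=\tau^\ast$. Passing to $k\to\infty$ then squeezes $F^k(\tau^{t-1})$ between $F^k(\textbf{0})\to\tau^\ast$ and $\tau^\ast$. Hence $F^\infty(\tau^{t-1})$ exists and equals $\tau^\ast$.

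I expect the main obstacle to be making ``$\infty$'' precise in the fuzzy case. Unlike Lemma~\ref{lemma:least-model-induced}, finiteness of $\mathcal A$ no longer guarantees that the iteration stabilizes after finitely many steps. My first approach would be to note that $F^k(\textbf{0})$ is a monotone nondecreasing bounded sequence in $[0,1]^n$, so it converges. Its limit is a fixpoint provided $\otimes$ and $\oplus$ are continuous, which holds for the Gödel, product and Łukasiewicz operators. I would therefore state this continuity as the standing interpretation of $\mathcal{C}^\infty$. As a fallback, I would read $F^\infty(\textbf{0})$ as the least fixpoint and invoke Knaster--Tarski on the complete lattice $[0,1]^n$. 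In that reading the sandwich still shows that every limit point of $F^k(\tau^{t-1})$ is at most $\tau^\ast$ and at least each $F^k(\textbf{0})$, which gives the same conclusion.
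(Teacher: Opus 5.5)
Your proof is correct and is essentially the paper's argument: monotonicity of $\mathcal{C}_{P,\phi^t}$ in $\tau$, the fixpoint property of $\mathcal{C}_{P,\phi^t}^\infty(\textbf{0})$, the sandwich $F^k(\textbf{0})\le F^k(\tau^{t-1})\le\tau^\ast$, and a passage to the limit; your remark that the limit of $F^k(\textbf{0})$ is a fixpoint only when $\otimes$ and $\oplus$ are continuous (true for G\"odel, product and \L{}ukasiewicz) makes explicit an assumption the paper uses without comment. Drop the Knaster--Tarski fallback, however: without (left-)continuity the sandwich only traps the limit points of $F^k(\tau^{t-1})$ between $\sup_k F^k(\textbf{0})$ and the least fixpoint, and these two can differ (e.g.\ with the drastic t-norm), so that reading does not give the conclusion and continuity is required rather than optional.
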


\begin{proof}
\edit{The operator $\mathcal{C}_{P,\phi^t}$ is monotone in $\tau$ by monotonicity of $\otimes$ and $\oplus$. Since $\tau^{t-1}\le\mathcal{C}_{P,\phi^{t}}^\infty(\textbf{0})$ and $\mathcal{C}_{P,\phi^{t}}^\infty(\textbf{0})$ is a fixpoint of $\mathcal{C}_{P,\phi^t}$, for every $n\ge0$ we have $\mathcal{C}_{P,\phi^t}^n(\textbf{0})\le\mathcal{C}_{P,\phi^t}^n(\tau^{t-1})\le\mathcal{C}_{P,\phi^{t}}^\infty(\textbf{0})$. Taking limits yields $\mathcal{C}_{P,\phi^{t}}^\infty(\tau^{t-1})=\mathcal{C}_{P,\phi^{t}}^\infty(\textbf{0})$.}
\end{proof}

\begin{theorem}[Binary terminal state yields stability]
\label{thm:binary-terminal-stability}
Let $P$ be a finite program and $S \subseteq \mathcal{A}_P$, where $\mathcal{A}_P$ is the set of atoms occurring in $P$.
Then $S$ is a stable model of $P$ if and only if propagation is applied until convergence with the resulting membership score $p_i\in\{0,1\}$ and $S=[p]$.
\end{theorem}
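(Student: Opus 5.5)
The plan is to reduce Theorem~\ref{thm:binary-terminal-stability} to Theorem~\ref{thm:termination-stable} via the crisp correspondence of Proposition~\ref{prop:binary-limit} and Lemma~\ref{lem:cp_crisp}. The key observation is the remark after the definition of $p_i$: $p_i=\tfrac12(\tau_i+1-\phi_i)\in\{0,1\}$ holds iff $(\tau_i,\phi_i)\in\{(1,0),(0,1)\}$. So a binary membership score forces the terminal state $(\tau,\phi)$ to be crisp, with $[\tau]\cap[\phi]=\emptyset$ and $[\tau]\cup[\phi]=\mathcal{A}_P$. Moreover $[p]=[\tau]$ and $\mathcal{A}_P\setminus[p]=[\phi]$.

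\textbf{($\Leftarrow$)} Suppose propagation has converged, so $\tau=\mathcal{C}_{P,\phi}^\infty(\tau')$ for the previous $\tau'$, and suppose $p\in\{0,1\}^n$. Put $S=[p]=[\tau]$ and $\Fa=[\phi]=\mathcal{A}_P\setminus S$.
\begin{enumerate}
\item Since $\tau$ is a crisp fixpoint of $\mathcal{C}_{P,\phi}$, Lemma~\ref{lem:cp_crisp} gives $C_{P,S}(S)=S$, so $S$ is a fixpoint of $C_{P,S}$.
\item It remains to show $S=C_{P,S}^\infty(\emptyset)$, so that Lemma~\ref{lem:cp_sm} applies. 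The inclusion $C^\infty_{P,S}(\emptyset)\subseteq S$ holds because $S$ is a fixpoint and the operator is monotone.
\item The reverse inclusion is the main obstacle. Unsupported self-supporting atoms could otherwise survive, since iteration starts from the previous $\tau^{t-1}$ and not from $\textbf{0}$.
\end{enumerate}
To close the gap, I would add the invariant $\tau^{t-1}\le\mathcal{C}^\infty_{P,\phi^t}(\textbf{0})$ and prove it by induction over iterations. The base case is $\tau^0=\mathcal{C}^\infty_{P,\phi^0}(\textbf{0})$. For the inductive step, Lemma~\ref{lem:false_monotonic} gives $\mathcal{C}^\infty_{P,\phi^{t}}(\textbf{0})\le\mathcal{C}^\infty_{P,\phi^{t+1}}(\textbf{0})$ because $\phi$ only increases, and Lemma~\ref{lem:monotonic} then gives $\tau^{t}=\mathcal{C}^\infty_{P,\phi^{t}}(\textbf{0})$. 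So the terminal $\tau$ equals $\mathcal{C}^\infty_{P,\phi}(\textbf{0})$. Reading this crisply via Lemma~\ref{lem:cp_crisp}, and noting that iterating from $\textbf{0}$ stays in $\{0,1\}^n$ when $\phi$ is binary, we get $S=C^\infty_{P,S}(\emptyset)$. Hence $S$ is a stable model by Lemma~\ref{lem:cp_sm}.

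If one prefers to avoid the fuzzy induction, an alternative is to argue that the crisp terminal state is reached by a crisp run. One could then invoke the $(\Leftarrow)$ direction of Theorem~\ref{thm:termination-stable} directly, with $\Un^\infty=\emptyset$ and $\Tr^\infty\cap\Fa^\infty=\emptyset$ supplied by binarity of $p$.

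\textbf{($\Rightarrow$)} Let $S$ be stable. Take the successful \modelname{} computation from Theorem~\ref{thm:termination-stable}, which decides atoms of $\mathcal{A}_P\setminus S$ false. Emulate it in \neuralmodelname{} with binary decisions $\delta^t$ equal to the indicator of the chosen atom. By Proposition~\ref{prop:binary-limit}, each update coincides with a \modelname{} iteration and the states stay crisp. At termination, $[\tau]=S$, $[\phi]=\mathcal{A}_P\setminus S$, and $\Un=\emptyset$, so every $(\tau_i,\phi_i)$ is $(1,0)$ or $(0,1)$. Therefore $p\in\{0,1\}^n$ with $[p]=S$, and propagation was run to convergence as required.
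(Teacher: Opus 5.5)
Your proposal is correct and follows the paper's proof. For ($\Leftarrow$), binarity of $p$ forces a crisp terminal state, monotonicity of $\phi$ with Lemmas~\ref{lem:false_monotonic} and~\ref{lem:monotonic} gives $\tau^\infty=\mathcal{C}^\infty_{P,\phi^\infty}(\textbf{0})$, and Lemmas~\ref{lem:cp_crisp} and~\ref{lem:cp_sm} finish; for ($\Rightarrow$), you emulate the successful \modelname{} computation with binary decisions via Proposition~\ref{prop:binary-limit}, exactly as the paper does. Your explicit induction on $\tau^{t-1}\le\mathcal{C}^\infty_{P,\phi^t}(\textbf{0})$ just spells out what the paper compresses into one citation of those two lemmas, but your suggested ``alternative'' of viewing the run as crisp does not work as stated: intermediate \neuralmodelname{} states are generally fuzzy, so that induction cannot be bypassed.
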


\begin{proof}
($\Leftarrow$)
Let $p\in\{0,1\}^n$ be the membership score obtained from $(\tau^\infty,\phi^\infty)$, and let $\Tr=[\tau]$ and $\Fa=[\phi]$.
Since $p_i=\tfrac12(\tau_i+1-\phi_i)$ and $\tau^\infty_i,\phi^\infty_i\in[0,1]$, $p_i\in\{0,1\}$ implies $(\tau^\infty_i,\phi^\infty_i)\in\{(1,0),(0,1)\}$, and thus $\Tr\cap\Fa=\emptyset$, $\Tr\cup\Fa=\mathcal{A}$ and $\Tr=[p]$.
$\phi$ increases monotonically as there are no negatives in Equation~(\ref{eq:phi}).
As propagation is applied until convergence
$\tau^\infty=\mathcal{C}_{P,\phi^\infty}^\infty(\tau^{\infty-1})$, and  from Lemmas~\ref{lem:false_monotonic} and \ref{lem:monotonic}, $\tau^\infty=\mathcal{C}_{P,\phi^\infty}^\infty(\textbf{0})$.
Since $[\textbf{0}]=\emptyset$, from Lemma~\ref{lem:cp_crisp}, $[\tau^\infty]=[\mathcal{C}_{P,\phi^\infty}^\infty(\textbf{0})]=C_{P,\mathcal{A}\setminus\Fa}^\infty(\emptyset)=C_{P,T}^\infty(\emptyset)$. 
From Lemma~\ref{lem:cp_sm}, $[p]=[\tau^\infty]$ is thus a stable model of $P$.

($\Rightarrow$)
Let $\tau^\infty$ and $\phi^\infty$ be the true and false degrees after the final iteration.
From Proposition~\ref{prop:binary-limit}, if we set the decision scores to be binary
and make the propagation phase compute until a fixpoint of $\mathcal C_{P,\phi}$, the iteration coincides with a \modelname iteration. From Theorem~\ref{thm:termination-stable}, there exists a successful \modelname computation that ends with the stable model $S$, and since $\tau,\phi$ is binary, $\Tr\cap\Fa=[\tau^\infty]\cap[\phi^\infty]=\emptyset$ and $\Un=\emptyset$, $(\tau^\infty,\phi^\infty)\in\{(\textbf{1},\textbf{0}),(\textbf{0},\textbf{1})\}$ and thus $p$ is binary and $[p]=S$.
\end{proof}

From Theorem~\ref{thm:binary-terminal-stability}, we can conclude that when the output $p$ of \neuralmodelname is trained with a loss function that enforces binary outputs, it inherently forces the \neuralmodelname to learn to output stable models, given that the final propagation is done until convergence. 
This holds even with a single iteration, since the decision step can falsify a set of atoms at once, c.f. Remark~\ref{rem:batch-decisions}.

\section{Experiments}
In this section, we perform experiments to answer the following three questions: 
\smallskip
\begin{itemize}[leftmargin=*,itemsep=0.5pt,topsep=0.5pt]
    \item[] (\textbf{Q1}) Can \neuralmodelname learn to solve ASP problems? 
    \item[] (\textbf{Q2}) Can \neuralmodelname compete with existing ASP-based neuro-symbolic frameworks?
    \item[] (\textbf{Q3}) What are novel settings 
    in explicit need of \neuralmodelname?
\end{itemize}
\smallskip

\begin{table*}[t]
\centering
\small
\setlength{\tabcolsep}{3pt}
\renewcommand{\arraystretch}{1.15}
\begin{tabular}{lcccccccccccccccccc}
\toprule
& \multicolumn{3}{c}{Addition}
& \multicolumn{3}{c}{Addition2}
& \multicolumn{3}{c}{Add2x2}
& \multicolumn{3}{c}{Apply2x2}
& \multicolumn{3}{c}{Member3}
& \multicolumn{3}{c}{Member5} \\
\cmidrule(lr){2-4} \cmidrule(lr){5-7} \cmidrule(lr){8-10} \cmidrule(lr){11-13} \cmidrule(lr){14-16} \cmidrule(lr){17-19}
& {\tiny \neuralmodelname} & {\tiny NASP} & {\tiny SLASH}
& {\tiny \neuralmodelname} & {\tiny NASP} & {\tiny SLASH}
& {\tiny \neuralmodelname} & {\tiny NASP} & {\tiny SLASH}
& {\tiny \neuralmodelname} & {\tiny NASP} & {\tiny SLASH}
& {\tiny \neuralmodelname} & {\tiny NASP} & {\tiny SLASH}
& {\tiny \neuralmodelname} & {\tiny NASP} & {\tiny SLASH} \\
\midrule
Train & \textbf{160} & {1612} & 430 & \textbf{235} & TO & 2374 & \textbf{692} & TO & TO & {346} & {385} & \textbf{166} & \textbf{110} & 2829 & TO & \textbf{969} & TO & TO \\
Test & \textbf{0.37} & 24.25 & 9.94 & \textbf{1.28} & 33.90 & 3.73 & \textbf{1.15} & 41.59 & 3.75 & \textbf{1.52} & 5.31 & 4.32 & \textbf{0.79} & 5.13 & 3.08 & \textbf{4.40} & 496.4 & 6.38  \\
\hdashline
\addlinespace[0.5ex]
R.Acc & \textbf{99.02} & 98.54 & 98.91 & 98.92 & 98.46 & \textbf{99.00} & \textbf{99.04} & 98.59 & 98.94 & \textbf{100.0} & \textbf{100.0} & \textbf{100.0} & \textbf{97.25} & 96.81 & 97.16 & \textbf{96.97} & 11.56 & 8.92 \\
I.Acc & \textbf{98.20} & 97.11 & 97.83 & 96.10 & 94.57 & \textbf{96.47} & \textbf{95.73} & 93.70 & 95.17 & \textbf{100.0} & \textbf{100.0} & \textbf{100.0} & \textbf{98.97} & 98.57 & 98.73 & \textbf{97.53} & 59.67 &  60.30\\
\bottomrule
\end{tabular}
\caption{\edit{Neuro-symbolic MNIST results for \neuralmodelname, NeurASP (NASP) and SLASH, all trained for 10 epochs.
Metrics are Train: Training time, Test: Testing time, R.Acc: Recognition Accuracy (\%), and I.Acc: Inference accuracy (\%).}}
\label{tab:mnist-results}
\end{table*}

\begin{table}[t]
\centering
\small
\begin{tabular}{llcccccc}
\toprule
\multirow{2}{*}{{Method}} & \multirow{2}{*}{{Split}} & \multicolumn{3}{c}{{N2L}} & \multicolumn{3}{c}{{3-LP}} \\
\cmidrule(lr){3-5} \cmidrule(lr){6-8}
 &  & E & M & H & E & M & H \\
\midrule
\multirow{2}{*}{{\neuralmodelname}} & E & \textbf{100} & \underline{80} & \underline{17} & \textbf{100} & \underline{61} & \underline{12} \\
 & M & \textbf{100} & \textbf{96} & \textbf{42} & \textbf{100} & \textbf{86} & \textbf{37} \\
\hdashline
\addlinespace[0.5ex]
\multirow{1}{*}{{Random}} & -- & 77 & 1 & 0 & 56 & 0 & 0 \\
\multirow{1}{*}{{R\modelname-1}} & -- & 40 & 9 & 0 & 7 & 0 & 0 \\
\multirow{1}{*}{{R\modelname-10}} & -- & 96 & 37 & 4 & 60 & 3 & 0 \\
\multirow{1}{*}{{R\modelname-100}} & -- & \textbf{100} & 68 & 15 & {99} & 14 & 0 \\
\bottomrule
\end{tabular}
\caption{Percentage of instances for which a stable model was found, when trained on the easy and medium sets of each corresponding dataset. R\modelname-$K$ corresponds to \modelname with random decisions, with a total of $K$ restarts. Best accuracy is in bold, and second best accuracy is underlined.}
\label{tab:n2l-results}
\end{table}

\subsection{Random Logic Programs}
\edit{To assess \neuralmodelname's behavior on unstructured problems, we first evaluate it on randomly generated logic programs. Random instances are utilized to evaluate neural approaches to symbolic solving, e.g., in Boolean Satisfiability~\cite{DBLP:conf/iclr/SelsamLBLMD19}, where they are used to evaluate whether learned models can learn to solve problems that are generally difficult to find heuristics for.
}

We follow the linear model $L(N2,c_1,c_2)$ for random negative two-literal (N2L) programs~\cite{DBLP:journals/tplp/WangWM15}. For $n$ atoms, each pure rule $a\leftarrow{not \,}b$ (with $a\neq b$) appears independently with probability $c_1/n$ and each contradiction rule $a\leftarrow{not \,}a$ appears with probability $c_2/n$.
Random N2L programs are attractive for an experimental setting as they are easy to produce, while being considerably difficult, as 
it includes NP-complete instances. The same setting has been used in 
other work that attempted to apply GNNs to Answer Set Computation~\cite{DBLP:conf/cilc/IeloR21}.

As supported models of N2L programs are known to be stable models as well, we look at problems that are not in this regard. Specifically, we look at the $k$-$LP(N,L)$ generator that has been studied for phase transitions in ASP~\cite{DBLP:conf/iclp/ZhaoL03}. We use the 3-LP generator, which has been shown to be hard around $L/N=5$, where $L$ and $N$ correspond to the number of rules and atoms respectively.

\subsubsection{Setup}
We focus on having three datasets per generator (N2L, and 3-LP) with different number $n$ of atoms. Specifically, we use easy (5--10), medium (11--50), and hard (51-100). We specifically use the hard set to determine whether the model is able to generalize to bigger problems.
Each dataset consists of 1000 training and 100 validating instances and 100 testing instances.
As the baseline, we implement R\modelname, which is \modelname with random decisions.
We test R\modelname with 1, 10, and 100 restarts, denoted as R\modelname-1, R\modelname-10, and R\modelname-100 respectively.
We compare \neuralmodelname with R\modelname to see whether \neuralmodelname is able to learn the decision making process to outperform random decisions.

For \neuralmodelname, we train for 1000 epochs, with the setup of an iteration of 10 (50 at test time), a hidden dimension of 32, a logical dimension of 32, a learning rate of $1\times10^{-3}$, and the G\"odel t-norm. The logical dimension means the model maintains multiple logical states at once, and we select the state whose output is closest to binary at the end. 
For the loss function, we use the smallest Binary Cross Entropy (BCE) loss over all stable models as the objective function.

\subsubsection{Results}
Table~\ref{tab:n2l-results} provides the percentage of problems in each test set solved.
The results show that \neuralmodelname is effectively learning the decision dynamics for solving random problems.
Specifically, \neuralmodelname is shown to completely outperform R\modelname in the 1-shot setting, with performance on medium and easy training datasets exceeding that of 100-shot R\modelname.
This empirically supports our claim that \neuralmodelname is an extension of \modelname, and that it can learn the decision heuristics for finding stable models for logic programs.

\subsection{Neuro-Symbolic Tasks}
As \neuralmodelname can approximate answer sets differentiably, it is natural to combine with existing neural networks based architectures to perform neuro-symbolic inference.
In all settings, the input consists of one or more MNIST images, and supervision is provided only through arithmetic relations between the underlying concepts, rather than through per-image labels~\cite{DBLP:journals/ai/ManhaeveDKDR21,DBLP:conf/icml/GauntBKT17,DBLP:conf/aaai/TsamouraHM21}. The goal is therefore to jointly learn perception using symbolic reasoning from weak, relational supervision.
\edit{This will allow us to evaluate the merits of using an end-to-end differentiable system, as opposed to a system that relies on separate training of the perception module and the reasoning module.}

\paragraph{ASP Encodings.}
In contrast to prior neuro-symbolic ASP systems that use neural predicates to interface with perception modules~\cite{DBLP:conf/ijcai/YangIL20}, we adopt a different strategy. We embed the output of neural models as fuzzy facts into a fixed ASP program that encodes the arithmetic constraints \edit{(Figure~\ref{fig:architecture})}. While this may seem to force \neuralmodelname to perform as a FASP solver, the objective is to produce binary outputs, explicitly forcing the backbone perception model to predict binary values as well.

\subsubsection{Setup}
In this experiment, we perform evaluations on six variants of the MNIST-based arithmetic reasoning tasks that are commonly used as neuro-symbolic benchmarks~\cite{DBLP:conf/ijcai/YangIL20,DBLP:journals/ai/ManhaeveDKDR21}: 1 and 2 digit addition, Add2x2, Apply2x2 (with handwritten operator images) and 3 and 5 digit membership. 
For \neuralmodelname, we use the setup with iteration of 1, hidden dimension of 8, logical dimension of 2, learning rate of $1\times10^{-3}$, and product t-norm.

We compare the performance of \neuralmodelname with ASP-based neuro-symbolic systems: NeurASP~\cite{DBLP:conf/ijcai/YangIL20} and SLASH~\cite{DBLP:journals/jair/SkryaginODK23}. Experiments are conducted with a maximum of 10 epochs for all approaches, with a timeout set to 3000 seconds; the training is cut off when the budget is reached, and accuracy is checked at the end of the epoch. The training setup used is identical to those given in the original papers.

\subsubsection{Results}
Table~\ref{tab:mnist-results} reports recognition accuracy, inference accuracy (accuracy of final output), and runtime (seconds) for both training and testing splits, averaged over 3 runs. \edit{Across six neuro-symbolic tasks, \neuralmodelname\ achieves higher digit and inference accuracy than NeurASP and SLASH for five, with the exception of the Addition2 task, where SLASH has a slightly higher recognition and inference accuracy.
Empirically, this shows the benefits of using end-to-end trainable systems: jointly optimizing perception and reasoning leads to more cohesive and consistent outputs, leading to performance similar to or better than systems that rely on symbolic solvers. This is similar to findings in prior works that use differentiable computation of supported models~\cite{DBLP:conf/ecai/TakemuraI24}, yielding faster training time and higher accuracy in certain setups.}

Furthermore, the training time for \neuralmodelname is significantly faster than NeurASP and SLASH. 
Specifically, for the Member5 task, both NeurASP and SLASH fail to train even one epoch within the time budget, leading to very low accuracy.
On the other hand, \neuralmodelname has a longer training time for Apply2x2; this is thought to be because there are extremely many grounded atoms in this task, 
which have to be processed at once for the forward pass of \neuralmodelname, leading to a much higher computational cost.  

{The difference is more prominent in the testing speed, where \neuralmodelname is an average of 41.8 times faster than NeurASP, and 6.8 times faster than SLASH.
This is mainly due to the GPU capabilities of \neuralmodelname; GPUs allow for higher parallelization, allowing our model to process multiple instances with one forward pass, even during testing.
In comparison, NeurASP can only process one instance at a time, while SLASH can process multiple instances only at training time.}

\subsection{Reasoning with Incomplete Data}
In this experiment, we look at settings where data is incomplete.
Given the predicted outputs of models with incomplete training, \neuralmodelname can add a layer of reasoning on top,
yielding a solution that is both correct in terms of the given requirements and is able to account for the information given by the incomplete data.

\subsubsection{Setup}
Visual sudoku is a task where each cell has an image corresponding to a number, and the goal is to find the correct solution for the given grid, with MNIST being the most common image used.
In this experiment, we hypothesize a setting where the neural network is fixed; we pretrain the model on MNIST images, and reason on its predictions to find the solution to the task.
By having models that are trained on less data, we can simulate a situation in which the given predictions are not correct 100\% of the time.

We specifically focus on the 4x4 setting with each cell containing an image with a number between 0-4 (0 for empty cell). We vary the amount of labeled training data for the MNIST predictor (100\%, 50\%, 10\%, 5\% and 1\%) with 500 steps of training, to test whether the framework can perform under imperfect settings. We compare against two baselines that use the same predictions:
\begin{itemize}
  \item \textbf{ASP:} Give the current predictions other than 0 as a fact, and solve the ASP program.
  \item \textbf{Neural:} Train a neural classifier that takes as input the predicted values of each cell (0 for empty cells), and predict the digit in each cell. This is trained until convergence.
\end{itemize}
In this setting, our framework is used in a separate way to the neuro-symbolic tasks. As the input facts are not trained to become binary, we use the facts only as a heuristic to guide \neuralmodelname to a solution. This is provided through the RNN's initial hidden states $s^0$ for each atom, which is computed by a single-layer MLP.
We train for 100 epochs \edit{(which is sufficient for the model to converge)} using \neuralmodelname with iterations of 3, hidden dimension of 8, logical dimension of 2, learning rate of $1\times10^{-3}$, and G\"odel t-norm.

\begin{table}[t]
\centering
\small
\setlength{\tabcolsep}{6pt}
\renewcommand{\arraystretch}{1.15}
\begin{tabular}{lccccc}
\toprule
 & {100\%} & {50\%} & {10\%} & {5\%} & {1\%} \\
\midrule
{\neuralmodelname} & \textbf{95.33} & \textbf{93.80} & \textbf{92.43} & \textbf{88.07} & \textbf{74.30} \\
{ASP} & \underline{92.77} & \underline{92.40} & \underline{90.03} & \underline{85.60} & 70.47 \\
{Neural} & 84.07 & 84.17 & 83.23 & 80.87 & \underline{73.30} \\
\midrule
{Digit Acc.} & 99.25 & 99.24 & 99.12 & 98.58 & 96.89 \\
\bottomrule
\end{tabular}
\caption{Visual sudoku accuracy (\%) with different digit classifier training sizes. Best score is bolded, and the second best is underlined. The row of Digit Acc. shows the prediction accuracy on MNIST (digits 0-4) of the backbone neural model.}
\label{tab:awa-results}
\end{table}

\subsubsection{Results}
Table~\ref{tab:awa-results} shows the percentage of problems where all digits for each cell were given correctly over 3 runs. In every setup, we can see that \neuralmodelname performs the best, with constant gains against the second best method. 
An interesting trend that can be observed is that when the amount of data goes up, the ASP baseline performs much better. This is because when the predictions are correct, a complete reasoning mechanism will always get the correct solution.

On the other hand, while increasing the performance of the digit classifier does not have too dramatic of an impact for the neural method, we can see that the deterioration in performance is much smaller. This shows that neural models in general are very strong with imperfect inputs, showing merits over classical methods that completely plummet at 1\%.
\neuralmodelname is able to achieve the best of both worlds: when the predictions are accurate, \neuralmodelname can properly reason on them to get the correct solution, and when the predictions are less accurate, it uses the information to find a solution that is most compatible with the predictions.

\section{Related Work}
\label{sec:related-work}

\subsection{ASP Solvers}
Classical ASP systems such as {smodels}~\cite{DBLP:conf/lpnmr/NiemelaS97} and {DLV}~\cite{DBLP:journals/tocl/LeonePFEGPS06} perform 2-valued branching search by first examining the true case and then the false case (or vice-versa).
Later systems such as {clingo}~\cite{DBLP:conf/iclp/GebserKKOSW16} and {WASP}~\cite{DBLP:conf/lpnmr/AlvianoDFLR13} adopt CDCL-inspired architectures that combine deterministic propagation with branching decisions, allowing for more efficient search and learning from conflicts.

\modelname differs from these systems in that it focuses on deciding only the falsity of atoms and propagating the true atoms.
Furthermore, this design treats stability as a result of computation: when the solution is contradiction-free, stability is guaranteed (Theorem~\ref{thm:termination-stable}). \edit{These features enable a differentiable extension by NDProp, whose crisp case provably retains the same theoretical properties as DProp. Such differentiable extensions are not easily achievable with existing ASP solvers.}

Fuzzy answer set programming is an extension of ASP to handle degrees of truth using fuzzy logic \cite{DBLP:conf/jelia/NieuwenborghCV06,DBLP:journals/tplp/AlvianoP13}. These approaches adapt the stable model semantics to allow the use of fuzzy truth values, often employing t-norms to define rule satisfaction and model construction.

\subsection{Fixpoint Semantics}\label{subsec:rw_fixpoint}

A wide range of fixpoint semantics for ASP 
have been proposed in the literature.
Earlier works formalized nondeterministic fixpoint computation in various ways.  For example, \citeNBYB{DBLP:conf/pods/SaccaZ90} incorporate a backtracking mechanism into bottom-up search, and 
\citeNBYB{DBLP:journals/jlp/InoueS96} characterize a parallel procedure to compute stable models \cite{DBLP:conf/cade/InoueKH92,DBLP:conf/pricai/ShiraiH04} by a fixpoint operator with case splitting and constraints.
Efficient search strategies to limit the number of case splitting during fixpoint computation could be learned by machine learning techniques, but there has been no such attempt yet.

\citeNBYB{DBLP:journals/ngc/Fages91} shows a non-contradictory strategy for justification-based derivation like truth maintenance system yields a stable model at a fixpoint.  
Similarly, the notion of persistent computation by \citeNBYB{DBLP:journals/ai/LiuPST10} selects applicable rules to compute stable models, and the idea leads to ASP solvers GASP \cite{DBLP:journals/fuin/PaluDPR09} and ASPeRiX \cite{DBLP:journals/tplp/LefevreBSG17}.
Such a selection strategy is similar to \modelname, which captures stable models as computations of iterations using propagation and choice. The main difference is that in GASP and ASPeRix, the choice is performed on all negative body literals of a chosen rule, while \modelname performs decisions on any undecided atom; the latter eases extension into the continuous domain.

Several works characterize stable models via alternating fixpoints and approximation operators~\cite{DBLP:journals/tcs/Fitting02}.  The alternating fixpoint theory \cite{DBLP:journals/ai/Przymusinski91} defines stable models as fixpoints of an operator that alternates between deriving positive and negative information. 
\citeNBYB{Denecker2000} develop an approximation theory that captures various non-monotonic semantics including stable models, through the use of approximating operators and their fixpoints.  These frameworks yield stable models in different ways than \modelname, which directly produces stable models through propagation--decision steps.

\subsection{Neural Answer Set Solving}
{Neurosymbolic integration of ASP is a field being actively explored.
NeurASP~\cite{DBLP:conf/ijcai/YangIL20} finds stable models of programs that contain neural atoms using symbolic solvers, and uses the results to train the neural perception model.
SLASH~\cite{DBLP:journals/jair/SkryaginODK23} extends this system to support joint distributions, richer queries, and scalable inference.}

{In contrast to works that use classical ASP solvers, several works} have explored the use of differentiable approaches for solving ASP problems.
One approach to accomplish this is by using deep learning with discrete solvers such as clingo.
Differentiable ASP/SAT~\cite{DBLP:conf/ilp/Nickles18a} compute stable models with ASP solvers utilizing derivatives of a cost function, and \citeNBYB{DBLP:conf/lpnmr/DodaroIOR22} explore the use of deep learning to generate domain-specific solvers for WASP.

Another differentiable approach is to directly compute supported or stable models in vector spaces.
\edit{\citeNBYB{DBLP:conf/kr/AspisBR020} use a matrix representation of logic programs to compute supported models, and solve a root-finding problem using Newton’s method.  \citeNBYB{DBLP:conf/lpnmr/TakemuraI22} propose another matricized method by defining a cost function that yields a supported model when the cost is minimized to be zero.
\citeNBYB{DBLP:journals/corr/abs-2306-06821} extend such matricized methods by encoding loop formulas in vector spaces, so that stable models can be obtained by minimizing a single objective.
\citeNBYB{DBLP:conf/cilc/IeloR21} use GNNs to solve negative two literal programs, which are known to have the property that all supported models are stable models. GNNs have been used in Boolean constraint solvers other than ASP, such as Boolean Satisfiability~\cite{DBLP:conf/iclr/SelsamLBLMD19} and Maximum Satisfiability~\cite{moriyama2025graphbased}, with the latter specifically showing that t-norms are beneficial for developing differentiable solvers.
Our approach is classified as a differentiable approach as well, with the main differences being how the solutions are being computed: prior works use the cost function to optimize the variable assignments, while \neuralmodelname uses a propagation-decision process to search for stable models, with the decision heuristics being learned. Additionally, our work requires only binarization to get stable models rather than supported models, while prior works require the use of loop formulas or do not consider it at all.}

\section{Conclusion}

We have presented decision--propagation (\modelname), a novel method for computing stable models.
\modelname alternates between falsity decisions and truth propagations, and a successful \modelname computation is shown to capture the stable model semantics.
We then proposed \neuralmodelname, a differentiable extension of \modelname that uses neural decisions and fuzzy propagations, and show that it coincides with \modelname at the binary limit.
Experiments on random logic programs and neuro-symbolic benchmarks show that \neuralmodelname is able to learn to solve ASP problems, and can be combined with neural models to train end-to-end.
Compared to existing approaches that use symbolic reasoning, our approach incorporates learning in the reasoning step as well, resulting in a higher accuracy.
The results display the power of a differentiable and learnable solver in pipelines where reasoning is required.

Throughout the experiments, we have noticed that some t-norms work better than others in each setting. An extensive study of t-norms in particular settings is left out for future work, as each t-norm has different properties that are beneficial in different settings~\cite{eiter2026tnorm}.
Another direction to explore is whether the semantics of \neuralmodelname aligns with FASP. At the moment, \neuralmodelname is aimed at normal logic programs, but with modifications, it could be used to solve fuzzy variants.
Finally, extending the framework to support often used constructs such as constraints, choices and aggregates is left for future work. \edit{This would allow for use in more complex neuro-symbolic tasks, and would be an important step towards applying the framework to real-world scenarios.}

\clearpage
\section*{Acknowledgements}
This work has been supported by JSPS KAKENHI Grant Number JP25K03190, JST CREST Grant Number JPMJCR22D3 and Grant-in-Aid for JSPS Fellows Grant Number JP26KJ1236, Japan. This research was funded in whole or in part by the Austrian Science Fund (FWF) 10.55776/COE12. The authors thank colleagues in the KBS Group at TU Wien for helpful discussions.

\bibliographystyle{named}
\bibliography{references}

\input{appendix}

\end{document}

%% file: appendix.tex
\clearpage
\appendix
\onecolumn
\setcounter{theorem}{0}
\setcounter{lemma}{0}
\setcounter{definition}{0}
\setcounter{proposition}{0}

\section{Example Iteration of \modelname}
Consider the program $P$ with two rules:
\[
  a \leftarrow {not\,} b
  \qquad
  b \leftarrow {not\,} a.
\]
There are two stable models, $\{a\}$ and $\{b\}$. A \modelname run starts at
\[
(\Tr^0,\Fa^0,\Un^0)=(\emptyset,\emptyset,\{a,b\}).
\]
If the decision step falsifies $b$, then
\[
\Fa^1=\{b\},\qquad \Tr^1=C_{P,\mathcal{A}\setminus\Fa^1}^\infty(\Tr^0)=\{a\},\qquad
\Un^1=\mathcal{A}\setminus(\Tr^1\cup\Fa^1)=\emptyset,
\]
so the state is $(\Tr^1,\Fa^1,\Un^1)=(\{a\},\{b\},\emptyset)$, which is a stable model. If instead we falsify $a$, then
\[
\Fa^1=\{a\},\qquad \Tr^1=C_{P,\mathcal{A}\setminus\Fa^1}^\infty(\Tr^0)=\{b\},\qquad \Un^1=\emptyset.
\]
This illustrates how decisions grow $\Fa$ while propagation computes $C_{P,\mathcal{A}\setminus\Fa}^\infty$.
In the binary regime, NDProp follows the same trajectory: a binary decision selects one atom to falsify, and exact propagation
converges to the corresponding stable model.

\section{Experiment Specifics}
\label{app:exp}
All code is provided in the anonymized repository: \url{https://github.com/sotam2369/NDProp}.

\subsection{Hardware Specifics}
\begin{itemize}[leftmargin=*,itemsep=0.5pt,topsep=0.5pt]
    \item CPU: AMD Ryzen Threadripper PRO 5995WX (64 cores, 128 threads).
    \item GPUs: 2$\times$ NVIDIA RTX 6000 Ada (49~GB), 2$\times$ NVIDIA RTX A6000 (49~GB). All experiments use only a single GPU.
    \item Driver/CUDA: 560.35.05 / 12.6.
\end{itemize}

\subsection{NDProp Settings}
Table~\ref{tab:ndprop-settings} summarizes the NDProp hyperparameters used in each experiment.
\begin{table}[ht]
\centering
\small
\setlength{\tabcolsep}{6pt}
\renewcommand{\arraystretch}{1.1}
\begin{tabular}{lccccccc}
\toprule
\textbf{Experiment} & \textbf{Epochs} & \textbf{Iterations} & \textbf{Hidden Dim.} & \textbf{Logical Dim.} & \textbf{Learning Rate} & \textbf{Batch Size} & \textbf{T-norm} \\
\midrule
Random Logic Programs & 1000 & 10 & 32 & 32 & 0.001 & 64 & G\"odel \\
Neuro-Symbolic MNIST & 10 & 1 & 8 & 2 & 0.001 & 64 & Product \\
Visual Sudoku & 100 & 3 & 8 & 2 & 0.001 & 64 & G\"odel \\
\bottomrule
\end{tabular}
\caption{\neuralmodelname settings used for each experiment.}
\label{tab:ndprop-settings}
\end{table}

\subsection{Datasets}
Table~\ref{tab:dataset-sizes} reports the train/test split sizes for the MNIST arithmetic and membership tasks.
\begin{table}[ht]
\centering
\small
\setlength{\tabcolsep}{6pt}
\renewcommand{\arraystretch}{1.1}
\begin{tabular}{lcccccc}
\toprule
\textbf{Split} & \textbf{Addition} & \textbf{Addition2} & \textbf{Add2x2} & \textbf{Apply2x2} & \textbf{Member3} & \textbf{Member5} \\
\midrule
\textbf{Train} & 30000 & 15000 & 50000 & 10000 & 10000 & 10000 \\
\textbf{Test} & 5000 & 1000 & 1000 & 1000 & 1000 & 1000 \\
\bottomrule
\end{tabular}
\caption{Train/test split sizes for MNIST arithmetic and membership tasks.}
\label{tab:dataset-sizes}
\end{table}

\subsubsection{Addition}
This task takes two MNIST digit images and asks for their sum. Supervision is only the total, so the model must infer both digits consistently; the feasible sum range is $0\ldots18$.

\subsubsection{Addition2}
This task forms two two-digit numbers from four images and supervises only their total. The structure couples all four digits through place value, with sums restricted to $0\ldots198$.

\subsubsection{Add2x2}
This variant places four digits in a $2\times2$ grid and provides the sums of each row and column. These overlapping constraints force global consistency across all four digits, not just a single aggregate total.

\subsubsection{Apply2x2}
This task pairs three digits with a grid of operators and constrains the row and column results of the resulting expressions. The supervision ties together arithmetic structure and operator selection rather than a single sum.

\subsubsection{Member3}
This task presents three digit images and a query digit, and the label indicates whether the query appears at least once. The supervision is a single membership bit, so the model must reason over all positions jointly.

\subsubsection{Member5}
This task is the same as Member3 but with five images, increasing the search space and the number of possible matches. The label still provides only a single membership bit.

\subsubsection{Visual Sudoku}
For the Visual Sudoku 4x4 task, we generated instances that have clues ranging from 4-12, and each board is checked to have only a single solution.
3 examples of sudoku boards that are generated are shown in Figure~\ref{fig:sudoku_eg}.

\begin{figure}[h]
    \centering
    \begin{subfigure}[b]{0.3\linewidth}
        \centering
        \includegraphics[width=\linewidth]{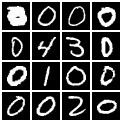}
        \caption{4 Clues}
    \end{subfigure}
    \hfill
    \begin{subfigure}[b]{0.3\linewidth}
        \centering
        \includegraphics[width=\linewidth]{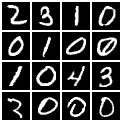}
        \caption{8 Clues}
    \end{subfigure}
    \hfill
    \begin{subfigure}[b]{0.3\linewidth}
        \centering
        \includegraphics[width=\linewidth]{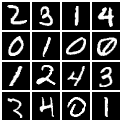}
        \caption{12 Clues}
    \end{subfigure}
    \caption{Example instances of 4x4 visual sudoku with different numbers of clues.}
    \label{fig:sudoku_eg}
\end{figure}

\subsection{ASP Encodings}
This appendix lists the ASP programs used in the experiments for \neuralmodelname.
NeurASP uses the original ASP programs provided by their authors, and SLASH uses the original for those provided, and NeurASP's for others.

\subsubsection{Addition}
\begin{lstlisting}
csum(S) :- digit(0, A), digit(1, B), S = A+B, S = 0..18.
\end{lstlisting}
\subsubsection{Addition2}
\begin{lstlisting}
sum(S) :- digit(0, A), digit(1, B), digit(2, C), digit(3, D),
          S = (A + C) * 10 + B + D, S = 0..198.
\end{lstlisting}

\subsubsection{Addition}
\begin{lstlisting}
row(0,S) :- digit(0,A), digit(1,B), S = A + B, S = 0..18.
row(1,S) :- digit(2,A), digit(3,B), S = A + B, S = 0..18.
col(0,S) :- digit(0,A), digit(2,B), S = A + B, S = 0..18.
col(1,S) :- digit(1,A), digit(3,B), S = A + B, S = 0..18.
\end{lstlisting}

\subsubsection{Apply2x2}
\begin{lstlisting}
apply(D1,0,D2,0,D3,R) :- digits(D1,D2,D3), R = (D1 + D2) + D3, R = -100..1000.
apply(D1,0,D2,1,D3,R) :- digits(D1,D2,D3), R = (D1 + D2) - D3, R = -100..1000.
apply(D1,0,D2,2,D3,R) :- digits(D1,D2,D3), R = (D1 + D2) * D3, R = -100..1000.

apply(D1,1,D2,0,D3,R) :- digits(D1,D2,D3), R = (D1 - D2) + D3, R = -100..1000.
apply(D1,1,D2,1,D3,R) :- digits(D1,D2,D3), R = (D1 - D2) - D3, R = -100..1000.
apply(D1,1,D2,2,D3,R) :- digits(D1,D2,D3), R = (D1 - D2) * D3, R = -100..1000.

apply(D1,2,D2,0,D3,R) :- digits(D1,D2,D3), R = (D1 * D2) + D3, R = -100..1000.
apply(D1,2,D2,1,D3,R) :- digits(D1,D2,D3), R = (D1 * D2) - D3, R = -100..1000.
apply(D1,2,D2,2,D3,R) :- digits(D1,D2,D3), R = (D1 * D2) * D3, R = -100..1000.

row(0,R) :- digits(D1,D2,D3), op(0,O1), op(1,O2), apply(D1,O1,D2,O2,D3,R).
row(1,R) :- digits(D1,D2,D3), op(2,O3), op(3,O4), apply(D1,O3,D2,O4,D3,R).

col(0,R) :- digits(D1,D2,D3), op(0,O1), op(2,O3), apply(D1,O1,D2,O3,D3,R).
col(1,R) :- digits(D1,D2,D3), op(1,O2), op(3,O4), apply(D1,O2,D2,O4,D3,R).
\end{lstlisting}

\subsubsection{Member3}
\begin{lstlisting}
member(D, 0) :- digit(0, N1), digit(1, N2), digit(2, N3), check(D),
                D != N1, D != N2, D != N3.
member(D, 1) :- check(D), not member(D, 0).
\end{lstlisting}

\subsubsection{Member5}
\begin{lstlisting}
member(D, 0) :- digit(0, N1), digit(1, N2), digit(2, N3), digit(3, N4), 
                digit(4, N5), check(D), D != N1, D != N2, D != N3, D != N4, D != N5.
member(D, 1) :- check(D), not member(D, 0).
\end{lstlisting}

\subsubsection{Visual Sudoku (4x4)}
We use the standard 4x4 Sudoku ASP encoding with normal rules only. The program includes per-cell exclusivity rules that ensure
each cell picks exactly one value, row/column exclusivity rules enforcing each digit appears once per row and column, and 2x2 subgrid
exclusivity rules. We also use integrity-style rules (shown as `bad` atoms) to rule out duplicate digits within a row, column, or block.
\begin{lstlisting}
% Per-cell exclusivity (example cell)
digit(1,1,1) :- not digit(1,1,2), not digit(1,1,3), not digit(1,1,4).
digit(1,1,2) :- not digit(1,1,1), not digit(1,1,3), not digit(1,1,4).
...

% Row/column exclusivity (examples)
digit(1,1,1) :- not digit(1,2,1), not digit(1,3,1), not digit(1,4,1).
digit(1,1,1) :- not digit(2,1,1), not digit(3,1,1), not digit(4,1,1).
...

% 2x2 subgrid exclusivity (example block)
digit(1,1,1) :- not digit(1,2,1), not digit(2,1,1), not digit(2,2,1).
digit(1,2,1) :- not digit(1,1,1), not digit(2,1,1), not digit(2,2,1).
...

% Constraints to rule out multiple digits in the same row/column/block
bad :- digit(1,1,1), digit(1,2,1), not bad.
bad :- digit(1,1,1), digit(1,3,1), not bad.
...
\end{lstlisting}